\theoremstyle{plain}
\newtheorem{theorem}{Theorem}[section]
\newtheorem{proposition}[theorem]{Proposition}
\newtheorem{lemma}[theorem]{Lemma}
\theoremstyle{definition}
\newtheorem{definition}[theorem]{Definition}
\theoremstyle{remark}
\icmltitlerunning{}
\newcommand{\DP}{\texttt{DP}\xspace}
\newcommand{\RDP}{\texttt{RDP}\xspace}
\newcommand{\DPSGD}{\texttt{DPSGD}\xspace}
\newcommand{\SGD}{\texttt{SGD}\xspace}
\newcommand{\LoRA}{\texttt{LoRA}\xspace}
\newcommand{\peft}{\texttt{PEFT}\xspace}
\newcommand{\ReLoRA}{\texttt{ReLoRA}\xspace}
\newcommand{\MIA}{\texttt{MIA}\xspace}
\newcommand{\thetav     }{\boldsymbol \theta     }
\begin{document}

\twocolumn[
  \icmltitle{\LoRA Provides Differential Privacy by Design via Random Sketching}



  \icmlsetsymbol{equal}{*}

  \begin{icmlauthorlist}
    \icmlauthor{Saber Malekmohammadi}{yyy}
    \icmlauthor{Golnoosh Farnadi}{yyy,xxx}
  \end{icmlauthorlist}

  \icmlaffiliation{yyy}{Mila - Quebec AI Institute, Montreal, Canada}
  \icmlaffiliation{xxx}{School of Computer Science, McGill University
and Université de Montréal,
Montreal, Canada}

  \icmlcorrespondingauthor{SM (work done while being a research intern at Mila)}{smsabermohammadi@gmail.com}

  \icmlkeywords{Machine Learning, ICML}

  \vskip 0.3in
]



\printAffiliationsAndNotice{}  

\section{Abstract}
Low-rank adaptation of language models has been proposed to reduce the computational and memory overhead of fine-tuning pre-trained language models. \texttt{LoRA} incorporates trainable low-rank matrices into some parameters of the pre-trained model, called \emph{adapters}. In this work, we show theoretically that the low-rank adaptation mechanism of \texttt{LoRA} is equivalent to fine-tuning \emph{adapters} with noisy batch gradients, with the noise variance being a decreasing function of adaptation rank ($r$). Motivated by this understanding, we prove \emph{inherent} differential privacy for \texttt{LoRA} when adaptation matrices $\mathbf{A}_\ell$ are frozen. We show that various factors, e.g., the adaptation rank and batch size, affect the guaranteed privacy level. Our findings provide useful insights into \LoRA and uncovers the reason behind the robustness of models fine-tuned with \LoRA to privacy attacks.
   
\section{Introduction}

In the recent years, we have witnessed vast and rapid advances in large language models (\texttt{LLM}s) \citep{touvron2023llamaopenefficientfoundation,openai2024gpt4technicalreport, zhang2022optopenpretrainedtransformer,zeng2023glm130bopenbilingualpretrained,gemma3team2025}. Modern \texttt{LLM}s often contain large number of parameters \citep{openai2024gpt4technicalreport, stable_diffusion}, making their fine-tuning computationally prohibitive. Parameter-efficient fine-tuning (\peft; \citet{peft, Ding2023ParameterefficientFO}) reduces the space complexity by updating a subset of parameters while freezing the base model \citep{houlsby, zaken}. \LoRA \citep{lora} is such a \peft method that updates only a subset of parameters, called \emph{adapters}. For each adapter $\mathbf{W}_\ell$ located in layer $\ell$ of a pretrained model, \LoRA introduces two trainable low-rank matrices $\mathbf{A}_\ell$ and $\mathbf{B}_\ell$, which are often \emph{randomly} initialized. The low-rank restriction considerably reduces the number of trainable parameters, yet \LoRA often matches full-parameter fine-tuning on downstream tasks \citep{lora}. There are also other variants of \LoRA. For instance, \ReLoRA \citep{relora} enables high-rank fine-tuning through iterative low-rank updates, by repeated merging and re-initializing the low-rank component matrices $\mathbf{A}_\ell$ and $\mathbf{B}_\ell$. Just like \LoRA, this process also introduces further \emph{randomness} into the fine-tuning procedure. 

Fine-tuning data often happens to contain privacy-sensitive data samples, rising the threat for privacy attacks to the fine-tuned models \citep{NEURIPS2019_zhu, zhao2020idlgimproveddeepleakage, Nasr2018ComprehensivePA, liu2024precuriousinnocentpretrainedlanguage}. These issues motivate privacy-aware fine-tuning of language models. Differential privacy (\DP) is the de facto standard for ensuring rigorous data privacy in machine learning \cite{Dwork2006OurDO, mcsherrytalwar, dpjournal}. A widely used algorithm to achieve \DP is \DPSGD \citep{abadi}, which employs Gaussian mechanism \cite{abadi, Dwork2006OurDO} and limits the influence of any single data sample on the final trained model by adding a calibrated noise to clipped batch gradients. However, fine-tuning the entire existing parameters of a pre-trained model using \DPSGD induces high space complexities \citep{abadi}. As a remedy, to achieve both data privacy and low space complexity, \DPSGD can be applied only to the \emph{adapters}, while freezing the rest of the base model. However, the additional noise injection by \DPSGD induces a tradeoff between utility loss and privacy enhancement of the fine-tuned model.

Interestingly, without a clear understanding of its underlying reason, some prior works observed that models that are solely fine-tuned with \LoRA show more robustness to privacy attacks compared to when they are fine-tuned with \SGD \cite{tahseen2025, liu2024precuriousinnocentpretrainedlanguage}. This observations motivate us to answer the following important and unanswered question rigorously:

\begin{center}
\emph{Why does fine-tuning with \LoRA yield to robustness to privacy attacks?}
\end{center}

Indeed, we show theoretically that a commonly used realization of \LoRA secretly outputs differentially private fine-tuned models \emph{by design}, meaning that as a \peft method, not only does \LoRA reduce the computational overheads of fine-tuning and preserve accuracy, but also it provides \emph{a provable data privacy}. The main insight in our privacy proof is that, when the component matrices $\{\mathbf{A}_\ell\}$ are frozen, \LoRA updates parameters by applying a Gaussian random sketching \citep{pmlr-v98-sheffet19a, lev2025gaussianmixingmechanismrenyi} to batch gradients. 
This is an interesting and insightful finding about \LoRA. The highlights of our contributions are as follows:

\begin{itemize}
    \item We show that fine-tuning \emph{adapters} with \LoRA is equivalent to fine-tuning them with noisy batch gradients (\cref{eq:param_update_noise}). 

    \item We show that the variance of the injected noise decreases with adaptation rank $(r)$ (\cref{lem:noise_var}). Also, its distribution is close to a Gaussian distribution in terms of the total variation distance ( \cref{lem:total_var}).

    \item We show that \LoRA randomly sketches batch gradients w.r.t. \emph{adapters} during fine-tuning time. Accordingly, we prove that it provides \DP by design (\Cref{lem:DPofRS} and \Cref{lem:DPofLora}). The guaranteed privacy level increases as the adaptation rank $r$ decreases.

    \item  Finally, we show experimentally that low-rank adaptation with a smaller rank $r$ provides more robustness to membership inference attacks (\Cref{sec:exps}), confirming our theoretical findings. 
    
\end{itemize}


\section{Related Work}
Some recent works observed that models fine-tuned with \LoRA show robustness to privacy attacks without a clear understanding of its root reason \citep{liu2024precuriousinnocentpretrainedlanguage, tahseen2025}. \citet{tahseen2025} observed that \LoRA can reduce unintended memorization in centralized and federated fine-tuning settings up to a factor of $10$. Overlooking the aforementioned \emph{inherent randomness} in low-rank adaptation methods such as \LoRA and \texttt{ReLoRA}, some works \citep{Liu2023DifferentiallyPL, tahseen2025} proposed privatized variants of \LoRA in federated and centralized settings. These methods adopt the Gaussian mechanism to the component matrices $\mathbf{A}_\ell$ and $\mathbf{B}_\ell$ of \LoRA, without identifying the \emph{inherent randomness} existing in \LoRA itself.

The work in \citep{flora} identified that the dynamics of \LoRA can be approximated by a random matrix projection. Under the condition that component matrices $\{\mathbf{A}_\ell\}$ are fixed and only $\{\mathbf{B}_\ell\}$ are tuned, this estimate is no longer an approximation but an exact equality. This variant of \LoRA has been shown to have more stability and less communication/computation overheads in privacy-preserving federated \LoRA settings \citep{sun_ICLR2024}, which makes it a practical and useful realization of \LoRA.

There is a plethora of research works exploring the usage of random projections, e.g., Johnson–Lindenstrauss (\texttt{JL}) transform \citep{arora2024differentiallyprivategeneralizedlinear, JL}. Some prior works studied the privacy-preserving characteristics of noisy Gaussian sketching for private linear regression \citep{Showkatbakhsh2018PrivacyUtilityTO, pmlr-v98-sheffet19a, lev2025gaussianmixingmechanismrenyi}: Given a matrix $\mathbf{Q}(\mathcal{D}) \in \mathbb{R}^{n \times m}$ computed on a privacy-sensitive dataset $\mathcal{D}$, noisy Gaussian sketching constructs a noisy compressed representation of it, as follows

\begin{align}\label{eq:random_sketching}
    &\mathcal{M}(\mathbf{Q}(\mathcal{D})) := \mathbf{Q}(\mathcal{D})\mathbf{A}^\top + \mathbf{Z}, \nonumber \\
    &\text{where}~~ \mathbf{A} \sim \mathcal{N}(\mathbf{0}, \sigma_A^2 \mathbb{I}_{r \times m}), ~~\mathbf{Z} \sim \mathcal{N}(\mathbf{0}, \sigma_g^2 \mathbb{I}_{n \times r}).
\end{align}

Very similar to the above noisy Gaussian sketching, \citet{li2025sketchedgaussianmechanismprivate} introduced sketched Gaussian mechanism (\texttt{SGM}) for when $\mathbf{Q}(\mathcal{D}) \in \mathbb{R}^{1 \times m}$ is a row vector (not a matrix). Using R\'enyi \DP tools \citep{Mironov_2017}, they presented a joint analysis of \texttt{SGM}’s overall privacy guarantee, which is significantly more flexible and sharper compared to isolated privacy analysis of sketching and Gaussian mechanism. \citet{7138618} showed that the mutual information
between $\mathbf{Q}(\mathcal{D})$ and its sketched version $\mathcal{M}(\mathbf{Q}(\mathcal{D}))$ can not be too large. Leveraging \texttt{RDP} framework, \citet{lev2025gaussianmixingmechanismrenyi} recently derived a tight privacy guarantee for the noisy Gaussian sketching, improving upon the bounds obtained in earlier works. However, in their privacy analysis, they consider zero-out notion of neighborhood for the matrix $\mathbf{Q}(\mathcal{D})$: all columns of $\mathbf{Q}(\mathcal{D})$ have a bounded norm and only one column can change to a zero column. In contrast, in our privacy analysis, we consider a more general notion of neighborhood on the underlying dataset $\mathcal{D}$ itself: replacing a sample in $\mathcal{D}$ results in a change in $\mathbf{Q}(\mathcal{D})$ with a bounded Frobenius norm. We use this more general privacy analysis of random sketching for our privacy analysis of \LoRA.

\citet{fang2025federatedsketchingloraflexible} proposed a sketched version of \LoRA for federated fine-tuning settings with heterogeneous communication and computation constraints for edge devices. In each round, each device receives a sampled sketching matrix from the server to selectively update submatrices of global \LoRA modules $\mathbf{A}_\ell$ and $\mathbf{B}_\ell$ maintained by the server. In contrast, we leverage the overlooked sketching mechanism that exists in \LoRA dynamics by design to prove a guaranteed privacy for \LoRA. Our findings justify robustness of models fine-tuned solely with \LoRA to privacy attacks.

\section{Preliminaries}\label{app:background}

\paragraph{\textbf{Basic Notation.}} 
We denote random matrices with boldface uppercase letters. For a matrix $\mathbf{A}$, $\|\mathbf{A}\|_F$ denotes its Frobenius norm. The $k \times k$ identity matrix is denoted by $\mathbb{I}_k$ and $\mathcal{N}(0, \mathbb{I}_{k1 \times k2})$ is $k1 \times k2$ matrix with \emph{i.i.d} standard normal entries. The fine-tuning dataset $\mathcal{D}$ has size $N$ and $b$ is the used batch size. There are $E$ epochs of fine-tuning and $T=EN/b$ gradient steps. The data batch used at the $t$-th gradient step is denoted by $\mathcal{B}^t$, and has batch loss $\mathcal{L}^t(\mathbf{W^t})$ and batch gradient $\nabla_{\mathbf{W}^t}\mathcal{L}^t(\mathbf{W}^t) = \frac{1}{b}\sum_{i \in \mathcal{B}^t}\nabla_{\mathbf{W}^t}l(x_i; \mathbf{W}^t)$ w.r.t. adapter $\mathbf{W}^t$, where $l$ is the used training loss. 

\paragraph{\textbf{Differential Privacy.}} 
In machine learning, the following definition of differential privacy (\DP) is commonly used:

\begin{definition}[($\epsilon,\delta$)-\DP \citep{Dwork2006OurDO}]
\label{def:epsilondeltadp}
A randomized mechanism $\mathcal{M}:\mathcal{A}\to \mathcal{R}$ with domain $\mathcal{A}$ and range $\mathcal{R}$ satisfies $(\epsilon,\delta)$-\DP if for any two adjacent input datasets $\mathcal{D} \simeq \mathcal{D}'$, which differ only by a single record\footnote{We consider adjacency by replacement, where a record in $\mathcal{D}$ is replaced by another record in $\mathcal{D}'$.}, and for any measurable subset of outputs $\mathcal{S} \subseteq \mathcal{R}$ it holds that
\begin{align*}
    \texttt{Pr}[\mathcal{M}(\mathcal{D})\in \mathcal{S}] \leq e^{\epsilon} \texttt{Pr}[\mathcal{M}(\mathcal{D}')\in \mathcal{S}]+\delta.
\end{align*}
\end{definition}

We also adopt a secondary and stronger notion of \DP, which is called R\'enyi \DP:

\begin{definition}[($\alpha, \epsilon$)-\texttt{RDP} \citep{Mironov_2017}]
\label{def:rdp}
A randomized mechanism $\mathcal{M}:\mathcal{A}\to \mathcal{R}$ with domain $\mathcal{A}$ and range $\mathcal{R}$ satisfies $(\alpha, \epsilon)$-\texttt{RDP} with order $\alpha>1$ if for any two adjacent input datasets $\mathcal{D} \simeq \mathcal{D}'$, it satisfies
\begin{align}
    D_{\alpha}\big(\mathcal{M}(\mathcal{D})||\mathcal{M}(\mathcal{D}')\big) \leq \epsilon,
\end{align}
\end{definition}
where $D_{\alpha}(P||Q)$ is the R\'enyi divergence between distributions $P$ and $Q$:

\begin{align}
    D_{\alpha}(P||Q) := \frac{1}{\alpha - 1} \log \mathbb E_{x\sim P} \bigg[ \bigg(\frac{P(x)}{Q(x)}\bigg)^{\alpha - 1}\bigg].
\end{align}

In contrast to $(\alpha, \epsilon)-$\DP, \RDP bounds the moments of the likelihood ratio of events induced by $\mathcal{D}$ and $\mathcal{D}'$. One can translate \RDP to $(\epsilon, \delta)$-\DP, as follows:

\begin{proposition}[Converting \texttt{RDP} \citep{ Cannon_kamath_2020}]If a mechanism $\mathcal{M}$ satisifes $(\alpha, \epsilon(\alpha))$-\texttt{RDP}, then for any $\delta>0$, it satisfies $(\epsilon(\delta), \delta)$-\DP, where 
\begin{align}
    \epsilon(\delta) = \inf_{\alpha>1} \epsilon(\alpha) + \frac{1}{\alpha - 1} \log \big(\frac{1}{\alpha \delta}\big) + \log \big(1-\frac{1}{\alpha}\big).
\end{align}
\label{prop:rdptodp}
\end{proposition}

The postprocessing property of \DP states that if mechanism $\mathcal{M}$ satisfies a \DP definition, $\mathcal{G}\circ \mathcal{M}$ enjoys the same privacy for any post-processing function $\mathcal{G}$ \citep{Mironov_2017, Dwork2006OurDO}.

Zero concentrated \texttt{DP} (\texttt{zCDP})
is another relaxed definition of differential privacy. Being $\rho$-\texttt{zCDP} ($\rho>0$) is equivalent to being $(\alpha, \rho \alpha)$-\texttt{RDP} simultaneously for all $\alpha > 1$ \citep{zcdp}. Accordingly, we have the truncated cocentrated \DP (\texttt{tCDP}) \citep{tcdp}: A mechanism is $(\rho, w)$-\texttt{tCDP} ($w>1$) if it is $(\alpha, \rho \alpha)$-\texttt{RDP} for the limited range $\alpha \in (1,w)$. We can use the following proposition to translate \texttt{tCDP} to $(\epsilon, \delta)$-\DP:

\begin{proposition}[Converting \texttt{tCDP} \citep{tcdp}] If a mechanism $\mathcal{M}$ satisifes $(\rho, w)$-\texttt{tCDP}, then for any $\delta>0$, it satisfies $(\epsilon(\delta), \delta)$-\DP, where 
\begin{align}
    \epsilon(\delta) = \begin{dcases}
     \rho + 2\sqrt{\rho \cdot \log(1/\delta)}~~~~~\text{if}~\log(1/\delta) \leq (w-1)^2 \rho\\
      \rho \cdot w + \frac{\log(1/\delta)}{w-1}
     ~~~~~~~~~\text{if}~\log(1/\delta) > (w-1)^2 \rho
\end{dcases}
\end{align}
\label{prop:tcdptodp}
\end{proposition}

Gaussian mechanism uses additive noise to achieve $(\epsilon, \delta)-$\DP. More specifically, it randomizes the output of a non-private computation $f(\mathcal{D}) \in \mathbb{R}^p$ on a dataset $\mathcal{D}$ as $\mathbf{G}_{\sigma}f(\mathcal{D}) := f(\mathcal{D})+\mathcal{N}(\mathbf{0},\sigma_{\texttt{\DP}}^2 \mathbb{I}_p)$. 
The variance $\sigma_{\texttt{\DP}}^2$ of the additive noise is calibrated to the $\ell_2$ sensitivity $\Delta_2f = \max_{\mathcal{D} \simeq \mathcal{D}'}\parallel f(\mathcal{D})-f(\mathcal{D}')\parallel_2$ of $f$. Gaussian mechanism is $\frac{\Delta_2f^2}{2 \sigma_{ \texttt{\DP}}^2}$-\texttt{zCDP}. It is also $(\epsilon, \delta)$-\texttt{DP} for all $\epsilon, \delta \in (0,1)$ that $ \sigma_{\texttt{\DP}}^2 \geq \Delta_2 f\sqrt{2\ln (1.25/\delta)}/\epsilon$. Gaussian mechanism has been used in \DPSGD algorithm \citep{abadi} for private training of models by randomizing intermediate data-dependent computations, e.g., batch gradients: At the $t$-th gradient update step on a current model $\thetav \in \mathbb R^p$, \DPSGD computes the following clipped noisy batch gradient:
\begin{align}
    \tilde{g}(\thetav) &= \frac{1}{b}\bigg[ \Big(\sum_{i \in \mathcal{B}^t} \Bar{g}_{i}(\thetav)\Big) + \mathcal{N}(\mathbf{0}, \sigma_{ \texttt{\DP}}^2 \mathbb{I}_p)\bigg] \nonumber \\
    &=  \frac{1}{b}\sum_{i \in \mathcal{B}^t} \Bar{g}_{i}(\thetav) + \mathcal{N}(\mathbf{0}, \frac{\sigma_{ \texttt{\DP}}^2}{b^2} \mathbb{I}_p),
    \label{eq:noisy_sg}
\end{align}
where $\Bar{g}_{i}(\thetav) = \texttt{clip}(g_i, c)$, $g_i$ is the sample gradient computed on the sample $i$ in the batch $\mathcal{B}^t$ and $c$ is a clipping threshold. For a given vector $\mathbf{v}$, $\texttt{clip}(\mathbf{v}, c) =  \min\{\|\mathbf{v}\|, c\} \cdot \frac{\mathbf{v}}{\|\mathbf{v}\|}$. Also,  $\sigma_{ \texttt{\DP}}=c\cdot z$, where $z$ is the noise scale that should be used by \DPSGD and can be computed by using a privacy accountant, e.g., the moments accountant \citep{abadi} or R\'enyi accountant \citep{Mironov_2017}.

\section{Dynamics of low-rank adaptation}

We now start to study the dynamics of \LoRA. \Cref{fig:lora} summarizes the findings in this section. In order to update a pre-trained \emph{adapter} $\mathbf{W}_{\ell} \in \mathbb{R}^{n \times m}$ located in layer $\ell$ of a pre-trained model, \LoRA incorporates low-rank decomposition matrices $\mathbf{B}_{\ell} \in \mathbb{R}^{n\times r}$ and $\mathbf{A}_{\ell} \in \mathbb{R}^{r\times m}$, with $r \ll \min \{n,m\}$, and performs the following forward pass in an adaptation layer $\ell$:

\begin{align}\label{eq:forward}
    y = (\mathbf{W}_{\ell}+\mathbf{B}_{\ell}\mathbf{A}_{\ell})x = \mathbf{W}_{\ell}x + \mathbf{B}_{\ell}\mathbf{A}_{\ell}x,
\end{align}

\begin{figure*}[t]
\centering
 \includegraphics[width=1.95\columnwidth]{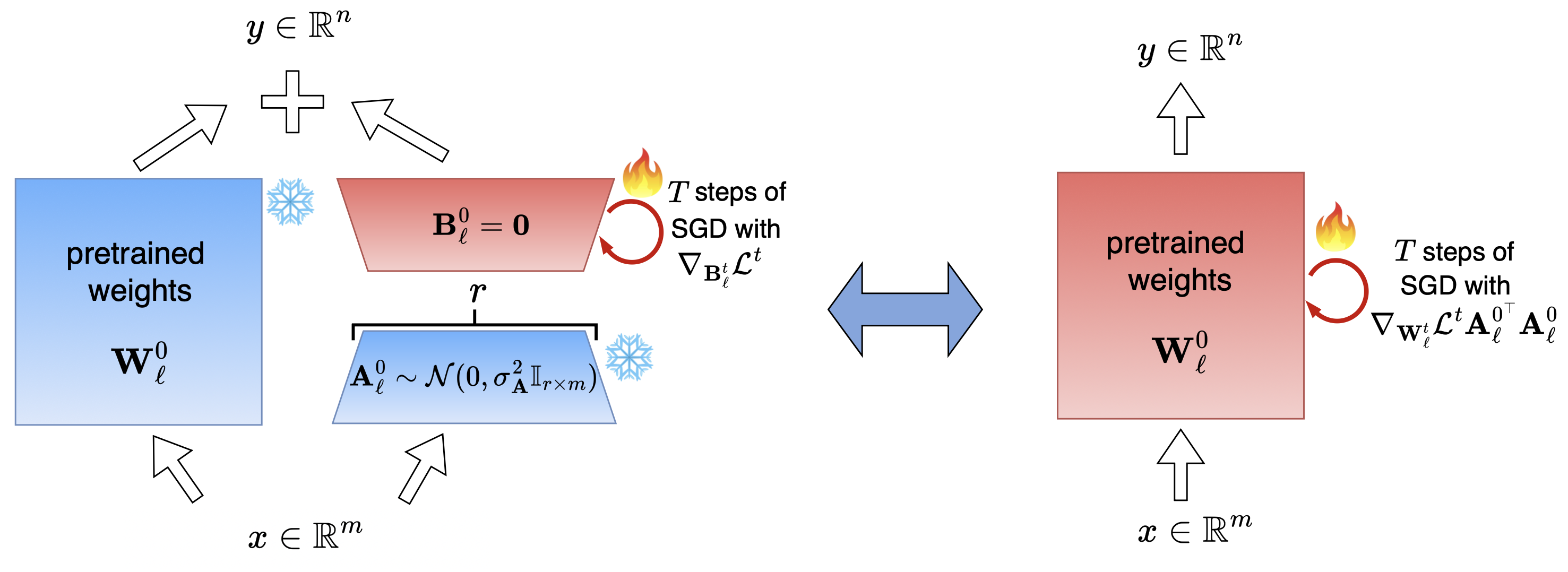}
 \caption{Low-rank adaptation of \LoRA in an adaptation layer $\ell$ initialized with pretrained weight $\mathbf{W}_\ell^0$, and component matrices $\mathbf{A}_\ell^0$ and $\mathbf{B}_\ell^0$. \textbf{Left:} \LoRA with a frozen component matrix $\mathbf{A}_\ell$ and fine-tuning component matrix $\mathbf{B}_\ell$ with $T$ SGD steps. \textbf{Right:} Fine-tuning the pretrained adapter matrix $\mathbf{W}_\ell^0$ with $T$ SGD steps with \emph{noisy} batch gradients $\nabla_{\mathbf{W}_\ell^t} \mathcal{L}^t \mathbf{A}_\ell^{0^\top} \mathbf{A}_\ell^0$. }
\label{fig:lora}
\end{figure*}

where $x \in \mathbb{R}^m$ and $y \in \mathbb{R}^n$ are the input and the pre-activation output of the current adaptation layer, respectively. For instance, there are 12 self-attention layers in GPT-2 (small) model and when fine-tuning it with \LoRA, $\mathbf{W}_{\ell}$ can be considered to be the query/key/value parameter in each of those 12 layers, which have dimension $(m,n)=(768,768)$. When back-propagating, \LoRA freezes $\mathbf{W}_\ell$ and calculates the gradients w.r.t. only $\mathbf{A}_{\ell}$ and $\mathbf{B}_{\ell}$, which can be found as follows:




\begin{align}\label{eq:down_projectionA}
    \color{blue}{\frac{\partial \mathcal{L}^t}{\partial \mathbf{A}_{\ell}}}  &= \frac{\partial \mathbf{B}_{\ell} \mathbf{A}_{\ell}}{\partial \mathbf{A}_{\ell}} \cdot \frac{\partial \mathcal{L}^t}{\partial \mathbf{B}_{\ell}\mathbf{A}_{\ell}} = \mathbf{B}_{\ell}^\top \cdot \frac{\partial \mathcal{L}^t}{\partial y} \cdot \frac{\partial y}{\partial \mathbf{B}_{\ell}\mathbf{A}_{\ell}} \nonumber \\
&= \mathbf{B}_{\ell}^\top \cdot \frac{\partial \mathcal{L}^t}{\partial y} \cdot x^\top = \color{blue}{\mathbf{B}_{\ell}^\top (\nabla_{\mathbf{W}_{\ell}} \mathcal{L}^t)},
\end{align}
 and 
\begin{align}\label{eq:down_projectionB}
\color{blue}{\frac{\partial \mathcal{L}^t}{\partial \mathbf{B}_{\ell}}} &=  \frac{\partial \mathcal{L}^t}{\partial \mathbf{B}_{\ell}\mathbf{A}_{\ell}} \cdot \frac{\partial \mathbf{B}_{\ell}\mathbf{A}_{\ell}}{\partial \mathbf{B}_{\ell}} = \frac{\partial \mathcal{L}^t}{\partial y} \cdot \frac{\partial y}{\partial \mathbf{B}_{\ell}\mathbf{A}_{\ell}} \cdot \mathbf{A}_{\ell}^\top \nonumber \\
&= \frac{\partial \mathcal{L}^t}{\partial y} \cdot x^\top \cdot \mathbf{A}_{\ell}^\top = \color{blue}{(\nabla_{\mathbf{W}_{\ell}} \mathcal{L}^t) \mathbf{A}_{\ell}^\top}.
\end{align}

Hence, $\frac{\partial \mathcal{L}^t}{\partial \mathbf{A}_{\ell}} \in \mathbb{R}^{r\times m}$ and $\frac{\partial \mathcal{L}^t}{\partial \mathbf{B}_{\ell}} \in \mathbb{R}^{n\times r}$. In fact, \emph{\LoRA down-projects the batch gradient $\nabla_{\mathbf{W}_{\ell}} \mathcal{L}^t$ from $\mathbb{R}^{n \times m}$ to a lower dimension, and updates the matrices $\mathbf{A}_{\ell}$ and $\mathbf{B}_{\ell}$ with the resulting projections}. 
Let $\mathbf{A}_{\ell}^0$, $\mathbf{B}_{\ell}^0$ and $\mathbf{W}_{\ell}^0$ denote the initial values of the matrices $\mathbf{A}_{\ell}$, $\mathbf{B}_{\ell}$ and the \emph{adapter} $\mathbf{W}_\ell^0$ (coming from the pretrained model). A common practice is to initialize $\mathbf{B}_\ell$ with an all-zero matrix so that at the beginning of fine-tuning, the original pretrained $\mathbf{W}_{\ell}^0$ does not change. 
A common initialization of $\mathbf{A}_{\ell}^0$ is sampling from Gaussian distribution $\mathcal{N}(0, \sigma_{\mathbf{A}}^2)$. 
Having the insights from \Cref{eq:down_projectionA} and (\ref{eq:down_projectionB}) in mind, we state the following lemma based on the derivations in \citep{flora}.

\begin{restatable}{lemma}{lorafinalforwardpass}
\label{lem:lora_final_forwardpass}
Initializing $\mathbf{B}_{\ell}$ with a zero matrix, let \LoRA update matrices $\mathbf{A}_{\ell}$ and $\mathbf{B}_{\ell}$ with \SGD and learning rate $\eta$ for $T$ gradient steps. Let $\mathbf{W}_{\ell}^t = \mathbf{W}_{\ell}^0 + \mathbf{B}_{\ell}^t\mathbf{A}_{\ell}^t$ be the \textbf{equivalent forward pass parameter} at the end of step $t$. The final equivalent parameter $\mathbf{W}_{\ell}^T$ can be written as: 
\begin{align}\label{eq:noisy_sgd}
    \mathbf{W}_{\ell}^T &= \mathbf{W}_{\ell}^0 + \mathbf{B}_{\ell}^T \mathbf{A}_{\ell}^T \nonumber \\
    &= \mathbf{W}_{\ell}^0 - \eta \bigg(\sum_{t=0}^{T-1} \nabla_{\mathbf{W}_{\ell}^t} \mathcal{L}^t \mathbf{A}_{\ell}^0{^\top} \mathbf{A}_{\ell}^0 \bigg ) + \mathcal{O}(\eta^3 ),
\end{align}

where the term $\nabla_{\mathbf{W}_{\ell}^t} \mathcal{L}^t$ in the sum is the batch gradient at time step $t$ w.r.t. $\mathbf{W}_{\ell}^t$, that would be obtained if we wanted to ``fine-tune $\mathbf{W}_{\ell}^t$ at time step $t$ with \SGD''.
\end{restatable}

In the next section, we show that the update in \cref{eq:noisy_sgd} to the forward pass parameter $\mathbf{W}_\ell^0$ is equivalent to ``fine-tuning'' $\mathbf{W}_\ell^0$ with a \emph{noisy} version of batch gradient $\nabla_{\mathbf{W}_{\ell}^t} \mathcal{L}^t$ at every time step $t$.

\section{\LoRA injects random noise into adapters' batch gradients}

Based on \Cref{lem:lora_final_forwardpass}, the update in \cref{eq:noisy_sgd} to the initial forward pass parameter $\mathbf{W}_{\ell}^0$ can be rewritten as:

\begin{align}\label{eq:param_update_noise}
    \mathbf{W}_{\ell}^T \approx \mathbf{W}_{\ell}^0 - \eta \sum_{t=0}^{T-1} \bigg[ \underbrace{\nabla_{\mathbf{W}_{\ell}^t} \mathcal{L}^t}_{\textit{ batch gradient}} +\underbrace{ \nabla_{\mathbf{W}_{\ell}^t} \mathcal{L}^t \big(\mathbf{A}_{\ell}^0{^\top} \mathbf{A}_{\ell}^0-\mathbb{I}_m\big)}_{\textit{noise} ~\in~ \mathbb{R}^{n\times m}}\bigg]
\end{align}

The second term in the sum represents a noise term introduced by the low-rank adaptation. Also, \emph{when $\{\mathbf{A}_\ell\}_{\ell=1}^L$ are frozen at their initialized values $\{\mathbf{A}_\ell^0\}_{\ell=1}^L$, the approximation above turns into an equality}. As mentioned earlier \LoRA with frozen $\{\mathbf{A}_\ell^0\}_{\ell=1}^L$ provides some benefits including: 1. less computational overhead while preserving model utility \citep{flora}, and 2. less communication overhead as well as more stability in federated \LoRA settings \citep{sun_ICLR2024}. Therefore, \emph{``low-rank adaptation'' of pretrained adapter $\mathbf{W}_{\ell}^0$ by introducing low-rank matrices $\{\mathbf{A}_{\ell}^0\}_{\ell=0}^L$ (frozen) and $\{\mathbf{B}_{\ell}\}_{\ell=0}^L$ (initialized with zeros) is equivalent to ``fine-tuning'' it with noisy batch gradients}. We are now particularly interested in the behavior of this noise term. In Appendix \ref{app:kaiming}, we also consider another initialization of $\{\mathbf{A}_{\ell}\}_{\ell=0}^L$ called Kaiming-uniform initialization, and show that it results in a \emph{biased} noise that almost cancels out the batch gradient $\nabla_{\mathbf{W}_{\ell}^t} \mathcal{L}^t$ at each time step $t$, leading to slow convergence and low utility. This justifies the adoption of the widely used Gaussian initialization for $\{\mathbf{A}_\ell\}_{\ell=1}^L$, which is studied next.

\subsection{Analysis of the additive noise with Gaussian initialization of $\{\mathbf{A}_\ell\}_{\ell=1}^L$}

When each of the 
$r$ columns of $\mathbf{A}_{\ell}^0{^\top} \in \mathbb{R}^{m\times r}$ is an 
$m$-dimensional Gaussian random variable, $\mathbf{A}_{\ell}^0{^\top} \mathbf{A}_{\ell}^0$ follows a Wishart distribution with $r$ degrees of freedom \citep{Wishart1}, which is the multivariate generalization of the chi-squared distribution. Therefore, for any 
$ \mathbf{q}\in \mathbb{R}^{1 \times m}$, $ \mathbf{q} \cdot (\mathbf{A}_{\ell}^0{^\top} \mathbf{A}_{\ell}^0-\mathbb{I}_m)$  is a weighted sum of multiple chi-squared random variables, which implies that the result follows a Gaussian distribution approximately, according to the Central Limit Theorem (CLT) \citep{bhattacharya2016course}. Therefore, we first state and prove the following lemma concerning the noise term in \cref{eq:param_update_noise}.

\begin{restatable}{lemma}{noisevar}\label{lem:noise_var}
Let $\mathbf{A}_{\ell} \in \mathbb{R}^{r\times m}$ be a matrix with \emph{i.i.d} entries sampled from $\mathcal{N}(0, \sigma_{\mathbf{A}}^2)$. Given a fixed $\mathbf{q}\in \mathbb{R}^{1 \times m}$, distribution of noise term $\mathbf{q}\cdot (\mathbf{A}_{\ell}^\top \mathbf{A}_{\ell}-\mathbb{I}_m) \in \mathbb{R}^{1 \times m}$ approaches the $m$-dimensional Gaussian distribution $\mathcal{N}\big((r\sigma_{\mathbf{A}}^2 - 1)~\mathbf{q}, ~r\sigma_{\mathbf{A}}^4 (\|\mathbf{q}\|_2^2 ~ \mathbb{I}_m + \mathbf{D})\big)$, as $m$ increases. $\mathbf{D}$ is a diagonal matrix with $\mathbf{q}_i^2$ as its $i$-th diagonal element. When $\sigma_{\mathbf{A}}^2 = 1/r$, it approcahes $\mathcal{N}\big(\mathbf{0}, ~ \frac{1}{r}(\|\mathbf{q}\|_2^2 ~ \mathbb{I}_m + \mathbf{D})\big)$, i.e., an \textbf{unbiased} noise with a variance decreasing with $r$.
\end{restatable}

Hence, distribution of $\mathbf{G} = \nabla_{\mathbf{W}_{\ell}^t} \mathcal{L}^t \cdot (\mathbf{A}_{\ell}^\top \mathbf{A}_{\ell} - \mathbb{I}_m)$ (as seen in \Cref{eq:param_update_noise}) also approaches a Gaussian distribution, where $\mathbf{G}_{i,:}$ ($i$-th row of $\mathbf{G}$, $1\leq i\leq n$) approaches to $\mathcal{N}\big((r\sigma_{\mathbf{A}}^2-1)[\nabla_{\mathbf{W}_{\ell}^t} \mathcal{L}^t]_{i,:}, ~ r\sigma_{\mathbf{A}}^4 \big(\|[\nabla_{\mathbf{W}_{\ell}^t} \mathcal{L}^t]_{i,:}\|_2^2 ~ \mathbb{I}_m + \mathbf{D}\big) \big)$, where $[\nabla_{\mathbf{W}_{\ell}^t} \mathcal{L}^t]_{i,:}$ is the $i$-th row of $\nabla_{\mathbf{W}_{\ell}^t} \mathcal{L}^t$ and $\mathbf{D}\in \mathbb{R}^{m\times m}$ is the diagonal matrix with $\mathbf{D}_{j,j} = [\nabla_{\mathbf{W}_{\ell}^t} \mathcal{L}^t]_{i,j}^2$. 

Having $\sigma_{\mathbf{A}}^2 \ll 1/r$ will lead to the noise term in \Cref{eq:param_update_noise} being unbiased: its distribution will be centered around $(r\sigma_{\mathbf{A}}^2 - 1) \nabla_{\mathbf{W}_{\ell}^t} \mathcal{L}^t \approx - \nabla_{\mathbf{W}_{\ell}^t} \mathcal{L}^t$ with a small variance. In other words, the noise term almost cancels the batch gradients at each time step, slowing down the fine-tuning procedure (see also Appendix \ref{app:kaiming}). Hence, a common practice for Gaussian initialization is to set $\sigma_{\mathbf{A}}^2 = 1/r$, which leads to an \emph{unbiased} noise injection into the batch gradients. Consistent with our above findings, \citet{kalajdzievski2023rankstabilizationscalingfactor} proposed rank-stabilized \LoRA (\texttt{rsLoRA}), an approach that enhances \texttt{LoRA}’s performance in
high-rank scenarios by setting $\sigma_{\mathbf{A}}^2 = 1/r$. Also, \citet{shuttleworth2025loravsfinetuningillusion} demonstrated recently that only with this design, \texttt{rsLoRA} can approach the performance of fine-tuning \emph{adapters} with \SGD, as the rank $r$ increases. Consistently, \Cref{fig:rplot} also shows that only under this setting of $\sigma_{\mathbf{A}}^2 = 1/r$, the noise term in \Cref{eq:param_update_noise} is unbiased and its magnitude decreases as $r$ increases. Therefore, \emph{we fix $\sigma_{\mathbf{A}}^2 = 1/r$ in the remaining sections. Hence, according to \Cref{lem:noise_var}, \textbf{the smaller the adaptation rank, the larger the magnitude of the noise injected into the batch gradients $\{\nabla_{\mathbf{W}_\ell^t}\mathcal{L}^t\}_{\ell=1}^L$ w.r.t. the adapters}}.

In Appendix \ref{app:complete_analysis}, we complete our analysis of the injected noise. Specifically, in Appendix \ref{app:gaussian_deviation}, we bound the total variation distance between the injected noise distribution and a pure Gaussian distribution with the same variance for a limited value of $m$ (as opposed to $m \to \infty$ in \Cref{lem:noise_var}).

The observations above motivate us to investigate if \LoRA with its inherent noise injection into batch gradients provides a provable differential privacy by design.  

\begin{figure}[t]
\centering
\includegraphics[width=0.95\columnwidth]{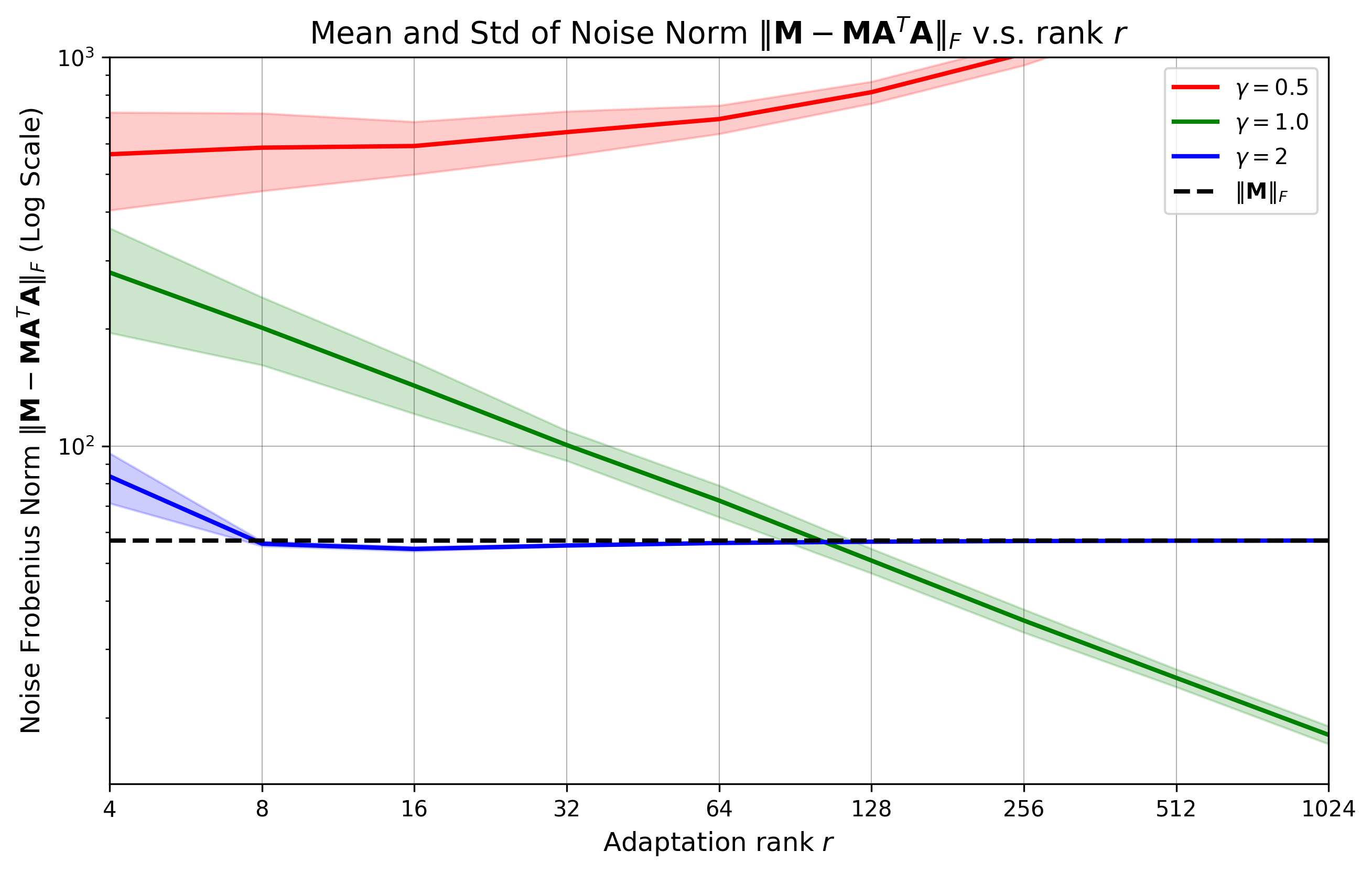}
    \caption{Magnitude (Frobenius norm) of the noise term $\mathbf{M}-\mathbf{M}\mathbf{A}^\top\mathbf{A}$ (as seen in \Cref{eq:param_update_noise}): The elements of the random matrix $\mathbf{M}\in \mathbb{R}^{100\times 100}$ are sampled uniformly from $(0,1)$. The elements of $\mathbf{A}\in \mathbb{R}^{r\times 100}$ are sampled from $\mathcal{N}(0,\sigma_{\mathbf{A}}^2)$, where $\sigma_{\mathbf{A}}^2 = 1/r^\gamma$. The results are based on 5 realizations of $\mathbf{A}$. Only for $\gamma=1$, i.e., $\sigma_{\mathbf{A}}^2 = 1/r$, the noise term is unbiased and its variance drops as $r$ increases, confirming the results in \Cref{lem:noise_var}.}\label{fig:rplot}
\end{figure}

\section{\LoRA provides \DP via randomly sketching batch gradients}
We now prove that \LoRA with Gaussian and frozen $\{\mathbf{A}_\ell^0\}_{\ell=0}^L$ is differentially private \emph{by design}. Let us recall the random sketching in \Cref{eq:random_sketching}, which constructs a noisy compressed representation $\mathcal{M}(\mathbf{Q}(\mathcal{D})) \in \mathbb{R}^{n \times r}$ of $\mathbf{Q}(\mathcal{D}) \in \mathbb{R}^{n \times m}$ computed on a privacy-sensitive dataset $\mathcal{D}$.
We first prove that, under some assumptions on the matrix $\mathbf{Q}(\mathcal{D})$, the result of the random sketching is provably differentially private.

\begin{restatable}{lemma}{DPofRS}\label{lem:DPofRS}
For a dataset $\mathcal{D}$ and its neighboring dataset $\mathcal{D}'$ (by replacement), let us assume that matrix $\mathbf{Q}(\mathcal{D})\in \mathbb{R}^{n \times m}$ ($m\geq n$) satisfies norm bound $\|\mathbf{Q}\|_F \leq B_F$ and sensitivity bound $\|\mathbf{Q}(\mathcal{D}) - \mathbf{Q}(\mathcal{D'})\|_F \leq B_S$. Let us assume that there is a lower bound for the eigen values of $\mathbf{Q}(\mathcal{D})\mathbf{Q}(\mathcal{D})^\top$: $\lambda_{min}(\mathbf{Q}(\mathcal{D})\mathbf{Q}(\mathcal{D})^\top)\geq \bar{\lambda}>0$, and also $\Gamma = \frac{\bar{\lambda}+\sigma_g^2/\sigma_A^2}{B_C}>1$, with $B_C = 2B_F B_S$. Then, the random sketching in \Cref{eq:random_sketching} satisfies $(\alpha, \epsilon(\alpha))$-\RDP for any $\alpha \in (1, \Gamma)$, and 

\begin{align}
    \epsilon(\alpha) = \frac{r \tau}{2(\alpha -1 )}\bigg[\alpha \log \big(1-\Gamma^{-1}\big) - \log \big(1-\alpha \Gamma^{-1}\big) \bigg],
\end{align}
\end{restatable}
 where integer $\tau \leq n$. The function $\epsilon(\alpha)$ is a non-negative function, as the term appearing in brackets is of the form $f(t;\alpha)= \alpha\log(1-t) -\log(1-\alpha t)$, which is non-decreasing and non-negative for every $\alpha>1$ and $\alpha t <1$. Our proof improves the result by \citet{lev2025gaussianmixingmechanismrenyi} for the privacy analysis of noisy Gaussian sketching: The work assumes $\mathbf{Q}(\mathcal{D})$ and $\mathbf{Q}(\mathcal{D}')$ differ in only one column with bounded norm $B_S$. We have instead replaced this assumption with the more general assumptions $\|\mathbf{Q}(\mathcal{D}) - \mathbf{Q}(\mathcal{D'})\|_F \leq B_S$ and $\|\mathbf{Q}(\mathcal{D})\|_F \leq B_F$. Similar to the bound by \citet{lev2025gaussianmixingmechanismrenyi}, our bound is tight: when $\mathbf{Q}(\mathcal{D})$ and $\mathbf{Q}(\mathcal{D}')$ differ in one column of $\mathbf{Q}(\mathcal{D})$ (with a bounded column norm $B_S$), and the column happens to be the eigenvector corresponding to the minimum eigenvalue of $\mathbf{Q}(\mathcal{D})\cdot \mathbf{Q}(\mathcal{D})^\top$, then we can replace the term $B_C = 2 B_F B_S$ with $B_C = B_S^2$ and set $\tau=1$, and the bound will be achieved with equality (see Appendix \ref{app:proofofdpofrs}). 

Accodring to the definition of $\Gamma$ in the lemma, we observe that the additive noise with variance $\sigma_g^2$ in \Cref{eq:random_sketching} can be looked at as artificially increasing the minimum eigen value bound $\bar{\lambda}$, which results in a smaller \RDP bound $\epsilon(\alpha)$ and a stronger privacy guarantee. Furthermore, for a fixed $\sigma_g^2$, the \RDP upper bound $\epsilon(\alpha)$ decreases linearly with the sketching dimension $r$, resulting in a stronger privacy guarantee as expected.

\paragraph{\textbf{Sketch of the proof.}} We first show that, for a given dataset $\mathcal{D}$, the $r$ columns of $\mathcal{M}(\mathbf{Q}(\mathcal{D})) \in \mathbb{R}^{n \times r}$ are \emph{i.i.d} multi-variate Gaussian variables with mean zero and covariance $\mathbf{\Sigma}_1 = \sigma_A^2 \mathbf{Q}(\mathcal{D})\mathbf{Q}(\mathcal{D})^\top + \sigma_g^2 \mathbb{I}_n$. Then, we show that  $D_{\alpha}(\mathcal{M}(\mathbf{Q}(\mathcal{D}))||\mathcal{M}(\mathbf{Q}(\mathcal{D}')))$ is a monotonic function of $ \sigma_A^2 \cdot \big(\lambda_{min}(\mathbf{\Sigma_1})\big)^{-1}\cdot \big\|\mathbf{Q}(\mathcal{D})\mathbf{Q}(\mathcal{D})^\top - \mathbf{Q}(\mathcal{D}')\mathbf{Q}(\mathcal{D}')^\top\big\|_F$, which can be upperbounded by $\Gamma^{-1}$. More details are deferred to Appendix \ref{app:proofofdpofrs}. 

The results above can be directly applied to \LoRA. Remember from \Cref{eq:param_update_noise} that when $\mathbf{A}_{\ell}$ is frozen, we have:

\begin{align}
    \mathbf{W}_{\ell}^T &= \mathbf{W}_{\ell}^0 + \mathbf{B}_{\ell}^T \mathbf{A}_{\ell}^0  = \mathbf{W}_{\ell}^0 + \bigg(-\eta \sum_{t=0}^{T-1}\nabla_{\mathbf{B}_{\ell}^t} \mathcal{L}^t\bigg) \mathbf{A}_{\ell}^0 \nonumber \\
    &= \mathbf{W}_{\ell}^0 + \bigg(-\eta \sum_{t=0}^{T-1}\nabla_{\mathbf{W}_{\ell}^t} \mathcal{L}^t \mathbf{A}_{\ell}^0{^\top}\bigg) \mathbf{A}_{\ell}^0,
\end{align}

where the last equality came from \Cref{eq:down_projectionB}. Comparing the last term with \Cref{eq:random_sketching}, we can clearly observe that, \LoRA in fact performs a random sketching on cumulative batch gradient $\mathcal{G}_\ell(\mathcal{D}):= \sum_{t=0}^{T-1}\nabla_{\mathbf{W}_{\ell}^t} \mathcal{L}^t$ using the Gaussian matrix $\mathbf{A}_{\ell}^0{^\top} \sim \mathcal{N}(\mathbf{0}, \frac{1}{r}\mathbb
I_{m\times r})$. In the following, we prove that this sketching mechanism provides differential privacy for \LoRA. For the generality of our analysis, let us assume that we also add some Gaussian noise $\mathbf{Z}_{\ell} \sim \mathcal{N}(\mathbf{0}, \sigma_g^2 \mathbb{I}_{n \times r})$ to the cumulative batch gradient $\sum_{t=0}^{T-1}\nabla_{\mathbf{B}_{\ell}^t} \mathcal{L}^t$ to get to the final noisy matrix $\tilde {\mathbf{B}}_{\ell}^T$ 
(obviously, setting $\sigma_g^2=0$ results in the original vanilla \LoRA). We will have:

\begin{align}\label{eq:param_update_noisy_lora}
    \tilde{\mathbf{W}}_{\ell}^T &= \mathbf{W}_{\ell}^0 + \tilde {\mathbf{B}}_{\ell}^T \mathbf{A}_{\ell}^0 = \mathbf{W}_{\ell}^0 -\eta \bigg(\sum_{t=0}^{T-1}\nabla_{\mathbf{B}_{\ell}^t} \mathcal{L}^t + \mathbf{Z}_{\ell}\bigg) \mathbf{A}_{\ell}^0 \nonumber \\
    &= \mathbf{W}_{\ell}^0 - \eta \bigg( \color{blue}\sum_{t=0}^{T-1}\nabla_{\mathbf{W}_{\ell}^t} \mathcal{L}^t \mathbf{A}_{\ell}^0{^\top} + \mathbf{Z}_\ell \color{black} \bigg) \mathbf{A}_{\ell}^0 \nonumber
    \\
    &= \mathbf{W}_{\ell}^0 - \eta \bigg(\color{blue}{\mathcal{G}_\ell(\mathcal{D}) \mathbf{A}_{\ell}^0{^\top} + \mathbf{Z}_\ell}\color{black} \bigg) \mathbf{A}_{\ell}^0.
\end{align}
We state and prove the following lemma for the \LoRA's set of output adapters $\{\tilde{\mathbf{W}}_\ell^T\}_{\ell=1}^L$. 


\begin{table*}[t]
  \centering
  \renewcommand{\arraystretch}{1.2}
  \caption{Membership inference attack on GPT-2. \LoRA (with frozen and trainable $\{\mathbf{A}_\ell\}$) shows robustness to \MIA in terms of AUC and TPR. \texttt{FA-LoRA} is \LoRA with frozen component matrices $\{\mathbf{A}_\ell\}$.}
  
  \resizebox{1.99\columnwidth}{!}{%
  \begin{tabular}{|p{5.5cm}|c|c|c|c|c|c|c|c|c|c|c|c|}
    \hline
    \footnotesize\textbf{Dataset} & \multicolumn{4}{c|}{\footnotesize\textbf{Enron}}& \multicolumn{4}{c|}{\footnotesize\textbf{Pubmed}}& \multicolumn{4}{c|}{\footnotesize\textbf{PTB}}\\
    \hline
    \footnotesize\textbf{Evaluation Criterion} & \footnotesize\textbf{Utility}&\multicolumn{3}{c|}{\footnotesize\textbf{MIA Success Metrics}}&  \footnotesize\textbf{Utility}&\multicolumn{3}{c|}{\footnotesize\textbf{MIA Success Metrics}}& \footnotesize\textbf{Utility}&\multicolumn{3}{c|}{\footnotesize\textbf{MIA Success Metrics}}\\
    \cline{1-13}
    \footnotesize \textbf{Metric}& \footnotesize PPL$\downarrow$ & \footnotesize AUC$\downarrow$ & \footnotesize @FPR 10\%$\downarrow$ & \footnotesize @FPR 1\%$\downarrow$ & \footnotesize PPL$\downarrow$ & \footnotesize AUC$\downarrow$ & \footnotesize @FPR 10\%$\downarrow$ & \footnotesize @FPR 1\%$\downarrow$ & \footnotesize PPL$\downarrow$ & \footnotesize AUC$\downarrow$ & \footnotesize @FPR 10\%$\downarrow$ & \footnotesize @FPR 1\%$\downarrow$\\
    \hline \hline
    \footnotesize\textbf{Full fine-tuning} & 18.49 & 0.876 & 66.31 & 13.02 & 16.64& 0.929& 80.67& 6.48& 27.70& 0.963 & 90.54 & 65.84\\ \hline
    \footnotesize\textbf{\LoRA with $(r,E,b)=(16, 20, 8)$} & 19.66& 0.757 & 40.56& 8.08& 18.82& 0.816& 50.43& 2.68& 31.18 & 0.961 & 89.97 & 59.88\\ \hline
    \footnotesize\textbf{\texttt{FA-LoRA} with $(r,E,b)=(16, 20, 8)$} & 21.90& 0.614 & 16.76& 4.49& 20.85& 0.546 & 14.93& 2.37& 37.93&0.952 & 82.23 & 55.58\\ \hline
    \footnotesize\textbf{\texttt{FA-LoRA} with $(r,E,b)=(16, 40, 16)$} & 21.05& 0.598 & 15.86& 4.09& 19.95& 0.486 & 12.86& 2.02& 35.86&0.872 & 79.63 & 48.23\\ \hline
  \end{tabular}}
  
\label{tab:exp_results}  
\end{table*}

\begin{restatable}{lemma}{DPofLoRA}\label{lem:DPofLora} Consider the described \LoRA realization with rank $r$ on the $L$ pretrained  adapters $\{\mathbf{W}_{\ell}^0\}_{\ell=1}^L$ $\in \mathbb{R}^{n \times m}$ ($m\geq n$) by performing $T$ \SGD steps with batch size $b$ on the initial component matrices $\{\mathbf{B}_\ell^0\}_{\ell=0}^L$. Let us assume train loss $l$ is $\beta$-smooth, and for every sample data point $x_i \in \mathcal{D}$, the sample gradient $\nabla_{\mathbf{W}_{\ell}^t}l(x_i; \mathbf{W}_{\ell}^t) \in \mathbb{R}^{n \times m}$ satisfies norm bound $\|\nabla_{\mathbf{W}_{\ell}^t} l(x_i; \mathbf{W}_{\ell}^t)\|_F \leq C$. Also, for the cumulative batch gradient $\mathcal{G}_\ell(\mathcal{D})=\sum_{t=0}^{T-1}\nabla_{\mathbf{W}_{\ell}^t} \mathcal{L}^t \in \mathbb{R}^{n \times m}$, we have $\lambda_{min}(\mathcal{G}_\ell(\mathcal{D}) \cdot \mathcal{G}_\ell(\mathcal{D})^\top)\geq \bar{\lambda}$ for some $\bar{\lambda}>0$. Finally, $\Gamma := \frac{\eta \beta}{4}\cdot \frac{b( \bar{\lambda} + r\sigma_g^2)}{ C^2 T ((1+ \eta \beta)^{T}-1)}>1$. If only the final model with the updated adapters $\{\tilde{\mathbf{W}}_\ell^T\}_{\ell=1}^L$ are released, the low-rank adaptation procedure satisfies $(\alpha, \epsilon(\alpha))$-\RDP for any $\alpha \in (1, \Gamma)$, and 
\begin{align}\label{eq:rdpbound}
    \epsilon(\alpha) = \frac{r L \tau}{2(\alpha -1 )}\bigg[\alpha \log \big(1-\Gamma^{-1}\big) - \log \big(1-\alpha \Gamma^{-1}\big) \bigg],
\end{align}
\end{restatable}

where integer $\tau \leq n$. We note that the norm bound $C$ on the sample gradients always exists if the train loss function $l$ is Lipschitz continuous. For every $\alpha \in (1, \Gamma)$, function $f(t; \alpha) = \alpha \log (1-t)-\log(1-\alpha t)$ in the brackets is a non-decreasing function of $t$ for $\alpha t <1$ and $f(0; \alpha)=0$. Setting $\sigma_g^2=0$ results in $\Gamma = \frac{\eta \beta}{4}\cdot \frac{b\bar{\lambda}}{ C^2 T ((1+ \eta \beta)^{T}-1)}$ and proves the privacy of the original vanilla \LoRA (with no additive noise). Given that $\Gamma$ is sufficiently large, the bound $\epsilon(\alpha)$ increases at most linearly with $\alpha$ for a range of $\alpha$, resulting in the following \texttt{tCDP} guarantee for \LoRA:

\begin{restatable}{corollary}{tCDPofLoRA}\label{cor:tCDPofLora} Considering the setup of \Cref{lem:DPofLora}, if $\Gamma>5/2$, then \LoRA satisfies $(rLn/2\Gamma^2, 2\Gamma/5)$-\texttt{tCDP}.
\end{restatable}

We can now directly use the above \texttt{tCDP} guarantee and \Cref{prop:tcdptodp} to derive the $(\epsilon, \delta)-$\DP guarantee of \LoRA: If $\Gamma>5/2$, \LoRA is $(rLn/2\Gamma^2 + \frac{\sqrt{2rLn\log(1/\delta)}}{\Gamma} , \delta)-$\DP. The dependence of the guarantee on $r$, $b$ and $T$ is of particular interest: the \texttt{DP} bound increases linearly with the adaptation rank $r$ (smaller rank leads to more noisy batch gradients). Furthermore, $\Gamma$ increases linearly with $b$ and, at a fixed number of gradient steps $T=EN/b$, using larger batch size $b$ results in a smaller \DP bound, suggesting more privacy. Finally, $\Gamma$ decreases linear-exponentially with $T$ and as expected, increasing $T$ leads to less privacy.

\section{Experiments}\label{sec:exps}
In this section, we perform membership inference attacks (\MIA) on some models fine-tuned with \LoRA to show how low-rank adaptation indeed provides robustness to them. 

\par{\textbf{Datasets:}} We run our experiments on three datasets with confidential properties: Penn Treebank (PTB) \citep{marcusetal-1993-building}, Enron \citep{Klimt2004TheEC} and Pubmed \citep{cohan2018discourseawareattentionmodelabstractive}. We split each dataset into three parts and use them as train, validation and auxilliary datasets.

\par{\textbf{Models:}}  We perform our experiments with GPT-2 (12-layer, 125M parameters, vocab size 50257) on two Nvidia A100 GPUs. We apply \LoRA \citep{lora} with default $r=16$ to the query parameters in the 12 attention layers of GPT-2. We also consider fine-tuning with \SGD (no low-rank adaptation) as another baseline. We use $E=20$ epochs for fine-tuning with the default learning rates $\{$\texttt{1e-5, 5e-4}$\}$ (with a linear scheduler) for full fine-tuning and low-rank adaptation with \LoRA, respectively.

\par{\textbf{Membership Inference Attacks:}} We use a calibrated membership score for membership inference \citep{carlini2022membershipinferenceattacksprinciples, mireshghallah2022memorizationnlpfinetuningmethods, sablayrolles2019whiteboxvsblackboxbayes, ye2022enhancedmembershipinferenceattacks}. More specifically, we train a reference model $\theta_{ref}$ on the auxiliary dataset and use the following signal 
for inferring train set membership of a sample $x$:

\begin{align}
    I_{\theta}(x) := \mathbb{I}[\mathcal{L}(x;\theta) - \mathcal{L}(x; \theta_{ref}) < \gamma]. 
\end{align}
Threshold $\gamma$ is set to the highest value for which the false positive rate will be lower than $\alpha \in \{10\%, 1\%\}$ and we report the corresponding true positive rates (TPR).

\par{\textbf{Metrics:}} To measure utility, we use the perplexity on the validation set (PPL). For measuring the effectiveness of \MIA, we use AUC and TPR$@$FPR $\alpha$ for $\alpha \in \{10\%, 1\%\}$.


\begin{figure*}[h]
    \centering
    \setlength{\columnsep}{0pt}\includegraphics[width=0.65\columnwidth,height=3.5cm]{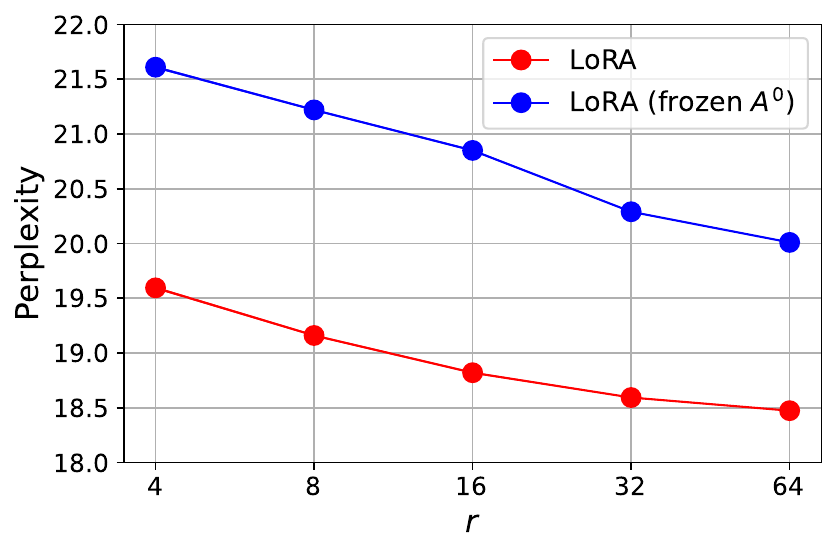} \includegraphics[width=0.65\columnwidth,height=3.5cm]{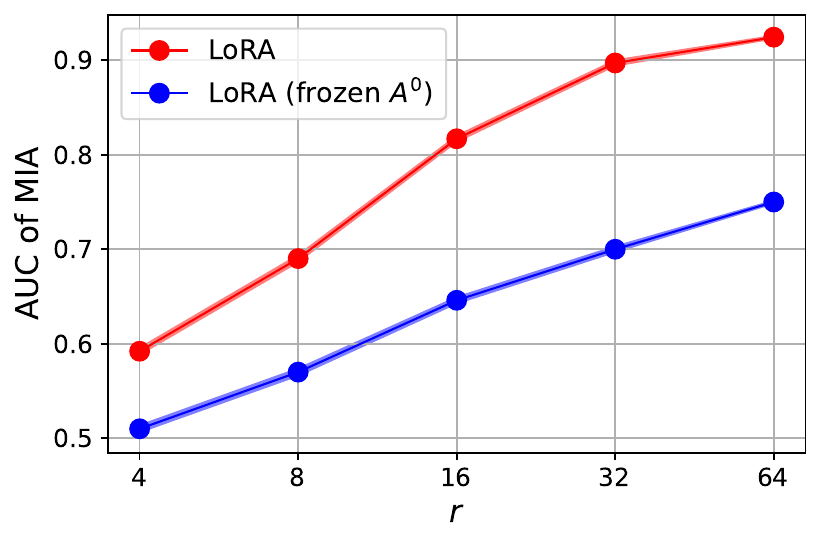}  
    \caption{The effect of adaptation rank $r$ averaged over three different data splits of Pubmed. \textbf{Left:} The utility of the fine-tuned model increases as $r$ increases. \textbf{Right:} The AUC of  \MIA attacks increases as $r$ increases, showing less robustness to attacks for lager $r$. 
    }
    \label{fig:ablation_r}
\end{figure*}

\begin{figure*}[h]
    \centering
    \setlength{\columnsep}{0pt}\includegraphics[width=0.65\columnwidth,height=3.5cm]{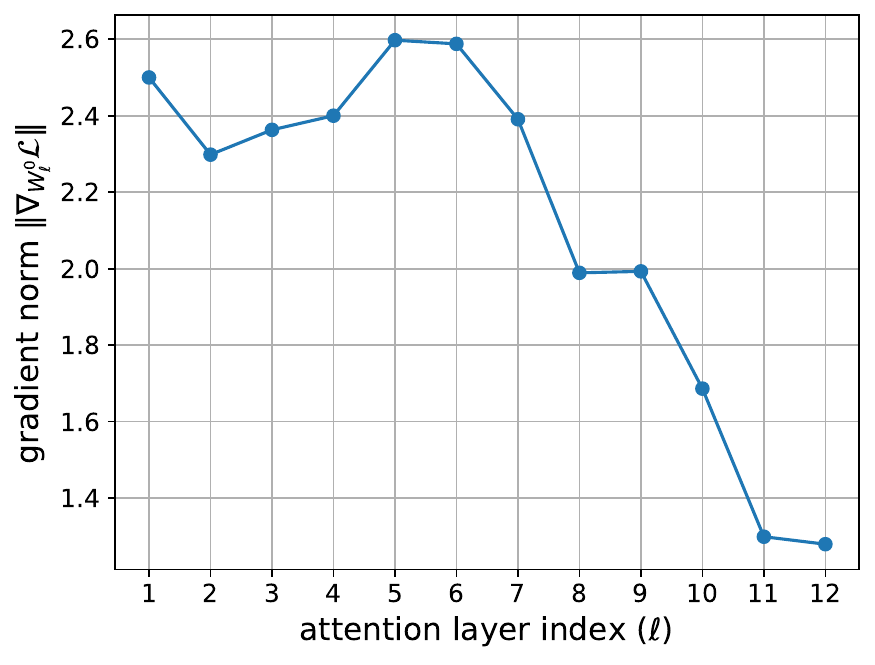} \quad \quad \includegraphics[width=0.6\columnwidth,height=3.5cm]{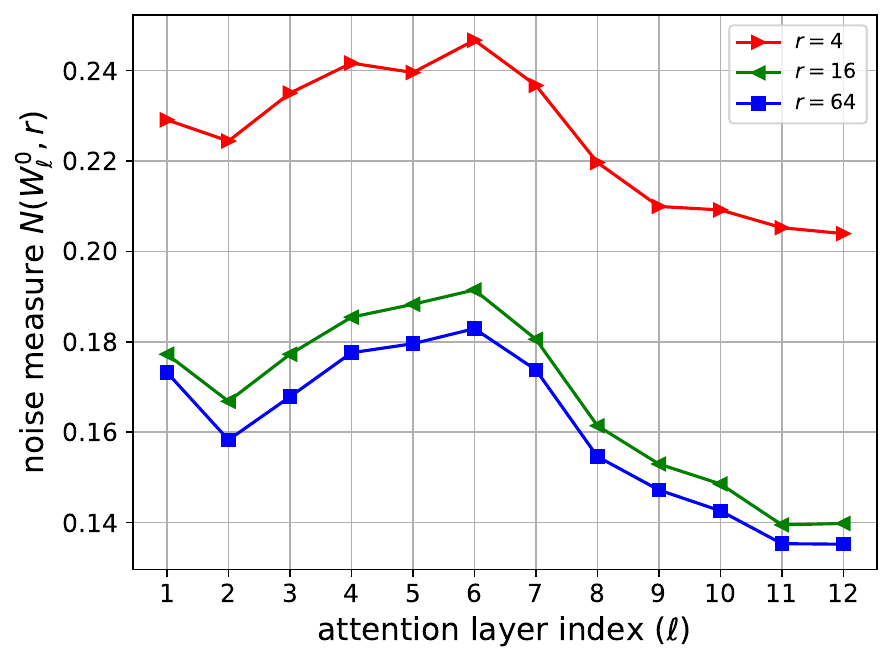}
    \caption{Experimental results obtained on Pubmed dataset showing that, unlike \DPSGD, \LoRA adds noise to batch gradient w.r.t. the adapter parameters \emph{non-uniformly}. \textbf{Left:} the norm of the batch gradient w.r.t. the \emph{adapters} in different layers of the GPT-2 model. Batch gradient w.r.t. the \emph{adapters} located in the fifth and sixth layers have the largest norms. \textbf{Right:} The noise measure $N(\mathbf{W}_{\ell}^0,r)$ evaluated at different layers (indexed by $\ell$) of the model, showing that those layers with larger batch gradient w.r.t. their \emph{adapter} experience a large noise injection by \LoRA. Also, as the adaptation rank $r$ increases the amount of noise injected to each layer decreases.  
    }
    \label{fig:layernorms}
\end{figure*}

\par{\textbf{Experimental results:}} In Table \ref{tab:exp_results}, we have reported the results obtained from evaluating  \MIA effectiveness on the GPT-2 model and different fine-tuning algorithms. As observed in the first row of results, full fine-tuning  all parameters achieves the best utility (PPL) on all datasets with the cost of a considerable vulnerability to \MIA. In contrast, as shown in the second row of results, choosing a subset of parameters in the existing 12 layers of the model (the adapters) and fine-tuning them with \LoRA improves robustness to  \MIA considerably. As observed, \LoRA achieves a considerably lower AUC and TPR@FPR compared to full fine-tuning at the cost of a slight drop in utility (larger perplexity), thanks to the low-rank adaptation mechanism incorporating \emph{noisy} batch gradients w.r.t. the adapters. Also, from the third row, we can observe that \LoRA with a frozen $\mathbf{A}_{\ell}^0$, which is compatible with our theoretical results, enhances this robustness even further with a slight drop in utility. These results are compatible with previous observations about \LoRA in the literature \citep{liu2024precuriousinnocentpretrainedlanguage, tahseen2025}. \citet{tahseen2025}. Comparing the last two rows, shows the effect of using a larger batch size $b$ at a fixed number of steps $T=EN/b$. As discussed, using a larger batch size yields to more robustness to \MIA. 

We observed in \Cref{lem:DPofLora} that the privacy bound $\epsilon(\alpha)$ decreases linearly with rank $r$. Therefore, we expect \LoRA to show more robustness to  \MIA as $r$ decreases. This is observed in \Cref{fig:ablation_r}, where we have done an ablation study on the Pubmed dataset about the effect of $r$. As observed in the left figure, the utility of the fine-tuned model decreases as $r$ decreases. Similarly, as observed in the right figure, robustness of the fine-tuned model w.r.t.  \MIA increases as $r$ decreases. This interesting observation shows that the adaptation rank $r$ adjusts the trade-off between utility of the fine-tuned model and its robustness to \MIA.

Now, we introduce a measure to quantify the amount of noise introduced by low-rank adaptation. Let $\mathbf{W}_{\ell}^0$ denote the pretrained adapter parameter at the layer $\ell$ of the model. Also, let $\mathbf{W}_{\ell}^1$ denote the adapter parameter after one \SGD update. Similarly, let $\mathbf{A}_{\ell}^1$ and $\mathbf{B}_{\ell}^1$ denote the parameters obtained from \LoRA (with rank $r$) after one \SGD update w.r.t. $\mathbf{B}_{\ell}^0$ ($\mathbf{A}_{\ell}^0$ is frozen). In that case, the equivalent forward pass parameter will be $ (\mathbf{W}_{\ell}^0+\mathbf{B}_{\ell}^1 \mathbf{A}_{\ell}^0)$. According to our previous understandings that low-rank adaptation of $\mathbf{W}_{\ell}^0$ is equivalent to fine-tuning it with noisy batch gradients, the amount of noise introduced by low-rank adaptation to the adapter $\mathbf{W}_\ell$ after one gradient step can be measured by:

\begin{align}
    N(\mathbf{W}_{\ell}^0,r) = \big\|\mathbf{W}_{\ell}^1 - (\mathbf{W}_{\ell}^0+\mathbf{B}_{\ell}^1 \mathbf{A}_{\ell}^0)\big\|.
\end{align}

We have used the above noise measure in \cref{fig:layernorms} to confirm our theoretical findings in \Cref{lem:noise_var} that \LoRA adds more noise to the \emph{adapters} $\mathbf{W}_\ell$ that have a batch gradient with larger norm. We observe that the adapter $\mathbf{W}_6$ in the 6-th attention layer of GPT-2 model, which has one of the largest batch gradients (left figure), experiences the largest noise in one gradient step of \LoRA (right figure). Also, as seen in the right figure, the amount of noise injected to different layers decreases uniformly as the rank $r$ increases. We already observed in \cref{fig:ablation_r} that smaller $r$ yields to more robustness to privacy attacks, altogether confirming our theoretical findings in \cref{lem:noise_var} and \cref{lem:DPofLora}.

\section{Conclusion}
In this study, we uncovered a new finding about the dynamics of low-rank adaptation with \LoRA. We showed that \LoRA secretly injects random noise into the batch gradients w.r.t. \emph{adapters} non-uniformly. By quantifying the variance of this noise, we showed that the rank of adaptation controls the amount of injected noise: the smaller the rank, the larger the magnitude of the injected noise. We also showed that this randomization mechanism can be looked at as a random sketching process that sketches batch gradients w.r.t. \emph{adapters} with a Gaussian matrix during fine-tuning time. This observation enabled us to prove that \LoRA, with frozen $\{\mathbf{A}_\ell\}$ component matrices, provides \DP \emph{by design}. We also showed experimentally that low-rank adaptation indeed provides robustness against membership inference attacks (\MIA) to fine-tuning data. The rank of adaptation can effectively balance utility of the fine-tuned model and its robustness to privacy attacks.

\section{Acknowledgement}
Funding support for project activities has been partially provided by Canada CIFAR AI Chair, Facebook award, and Google award.

\bibliography{example_paper}
\bibliographystyle{icml2026}

\newpage
\appendix
\onecolumn
\begin{center}
\Large
\bf
Appendix: \emph{``\LoRA Provides Differential Privacy by Design via Random Sketching"}
\end{center}

\section{Useful results}\label{sec:useful_thms}
In this section, we mention some established results, which we will use in our proofs.




\begin{lemma}\label{lem:AABBbound}
    Let $\mathbf{A}$ and $\mathbf{B}$ be two matrices with bounded Frobenius norms: $\|\mathbf{A}\|_F \leq B_F$, $\|\mathbf{B}\|_F \leq B_F$, and bounded difference: $\|\mathbf{A} - \mathbf{B}\|_F \leq B_S$. Then:

    \begin{align}
        \|\mathbf{A}\mathbf{A}^\top - \mathbf{B}\mathbf{B}^\top\|_F \leq 2B_F B_S.
    \end{align}
\end{lemma}

\begin{proof}
    We have
    \begin{align}
        \|\mathbf{A}\mathbf{A}^\top - \mathbf{B}\mathbf{B}^\top\|_F &= \|\mathbf{A}(\mathbf{A}-\mathbf{B})^\top + (\mathbf{A}-\mathbf{B})\mathbf{B}^\top\|_F \nonumber \\
        &\leq \|\mathbf{A}(\mathbf{A}-\mathbf{B})^\top\|_F + \|(\mathbf{A}-\mathbf{B})\mathbf{B}^\top\|_F \nonumber \\
        & \leq \|\mathbf{A\|_F\|}(\mathbf{A}-\mathbf{B})^\top\|_F + \|(\mathbf{A}-\mathbf{B})\|_F\|\mathbf{B}^\top\|_F \nonumber \\
        & \leq 2B_FB_S, \nonumber
    \end{align}
    where the second last line is obtained from the fact that Frobenius norm is sub-multiplicative for matrix products.
\end{proof}

\begin{lemma}\label{lem:evofXAinvY}
    Assume $\mathbf{A}$ is an invertible matrix, and $\mathbf{X}$ is also a matrix. Then, the following bound holds for the eigen values of $\mathbf{X}^\top \mathbf{A}^{-1} \mathbf{X}$:

    \begin{align}
    \lambda_i(\mathbf{X}^\top \mathbf{A}^{-1} \mathbf{X}) \leq \frac{\|\mathbf{X}\mathbf{X}^\top\|_F}{\lambda_{min}(\mathbf{A})},
    \end{align}
    where $\lambda_{min}(\cdot)$ denotes the minimum eigen value.
\end{lemma}

\begin{proof}
    First, note that for a matrix $\mathbf{X}$, $\|\mathbf{X}^\top\|_2 = \|\mathbf{X}\|_2 = \sigma_{max}(\mathbf{X})$, where $\sigma_{max}(\cdot)$ denotes maximum singular value. Also, as $\mathbf{A}$ is invertible, $\|\mathbf{A}^{-1}\|_2 = \sigma_{max}(\mathbf{A}^{-1}) = \lambda_{max}(\mathbf{A}^{-1}) = \frac{1}{\lambda_{min}(\mathbf{A})}$, where $\lambda_{min}$ ($\lambda_{max}$) is the minimum (maximum) eigen value. Therefore, from the sub-multiplicative property of spectral norm, we have:
    \begin{align}
    \|\mathbf{X}^\top\mathbf{A}^{-1}\mathbf{Y}\|_2 &\leq \|\mathbf{X}^\top \|_2 \cdot \|\mathbf{A}^{-1}\|_2 \cdot \|\mathbf{Y}\|_2 = \sigma_{max}(\mathbf{X})\cdot \lambda_{max}(\mathbf{A})\cdot \sigma_{max}(\mathbf{Y}) = \frac{\sigma_{max}(\mathbf{X})\cdot \sigma_{max}(\mathbf{Y})}{\lambda_{min}(\mathbf{A})}.\nonumber
    \end{align}
    Now, if we set $\mathbf{X} = \mathbf{Y}$, we get:

    \begin{align}
    \|\mathbf{X}^\top\mathbf{A}^{-1}\mathbf{X}\|_2 \leq \|\mathbf{X}^\top \|_2 \cdot\|\mathbf{A}^{-1}\|_2 \cdot\|\mathbf{X}\|_2 = \frac{\sigma_{max}(\mathbf{X})\cdot \sigma_{max}(\mathbf{X})}{\lambda_{min}(\mathbf{A})} = \frac{\sigma_{max}^2(\mathbf{X})}{\lambda_{min}(\mathbf{A})} = \frac{\|\mathbf{X}\|_2^2}{\lambda_{min}(\mathbf{A})} &= \frac{\|\mathbf{X}\mathbf{X}^\top\|_2}{\lambda_{min}(\mathbf{A})} \nonumber \\
    & \leq \frac{\|\mathbf{X}\mathbf{X}^\top\|_F}{\lambda_{min}(\mathbf{A})},\nonumber
    \end{align}
    where the last equality in the first line holds as $\sigma_{max}(\mathbf{X}\mathbf{X}^\top) = \sigma_{max}^2(\mathbf{X})$.
\end{proof}

\begin{lemma}[Matrix determinant lemma, \citep{matrixalgebra}]\label{lem:MDL}
    Let $\mathbf{A} \in \mathbb{R}^{n\times n}$ be an invertible matrix. Also, let $\mathbf{U} \in \mathbb{R}^{n\times k}$ and $\mathbf{V} \in \mathbb{R}^{n\times k}$ be two matrices of rank $k$ ($k\leq n$). Then:

    \begin{align}
        \det (\mathbf{A}+\mathbf{U}\mathbf{V}^\top) = \det (\mathbf{A}) \cdot \det (\mathbb{I}_k+\mathbf{V}^\top \mathbf{A}^{-1}\mathbf{U}).
    \end{align}
\end{lemma}

\begin{lemma}[Weyl's inequality, \citep{Weyl1912DasAV}]\label{lem:weylineq}
    Let $\mathbf{A}$ and $\mathbf{B}$ be two symmetric matrices, and let $\lambda_{min}(\cdot)$ denote the minimum eigenvalue. Then:

    \begin{align}
        \lambda_{min}(\mathbf{A}+\textbf{B}) \geq \lambda_{min}(\mathbf{A}) + \lambda_{min}(\mathbf{B}).
    \end{align}
\end{lemma}

\begin{lemma}[Sylvester's determinant identity, \citep{Sylvester}]\label{lem:sylvestereq}
    Let $\mathbf{A} \in \mathbb{R}^{n \times m}$ and $\mathbf{B} \in \mathbb{R}^{m \times n}$:

    \begin{align}
        \det(\mathbb{I}_m + \mathbf{B}\mathbf{A}) = \det(\mathbb{I}_n + \mathbf{A}\mathbf{B}).
    \end{align}
\end{lemma}

\begin{theorem}[Chi-Squared distribution: \citep{AlexMood}, Section 4.3, Theorem 7]\label{thm:chisquared}
If the random variables $X_i$, $i=1, \ldots, k$, are normally and independently distributed with means $\mu_i$ and variances $\sigma_i^2$, then

\begin{align}
    U = \sum_{i=1}^k \big(\frac{X_i- \mu_i}{\sigma_i}\big)^2
\end{align}

has a chi-squared distribution with $k$ degrees of freedom: $U \sim \mathcal{X}_k^2$. Also, $\mathbb{E}[U]=k$ and $\texttt{Var}[U]=2k$. 
\end{theorem}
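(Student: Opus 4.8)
The plan is to reduce to the standardized case and then characterize the law of the sum of squares via its moment generating function (MGF). First I would set $Z_i = (X_i - \mu_i)/\sigma_i$ for each $i$. Since $X_i \sim \mathcal{N}(\mu_i, \sigma_i^2)$, the affine transformation of a Gaussian gives $Z_i \sim \mathcal{N}(0,1)$, and independence of the $X_i$ transfers to the $Z_i$. The claim then reduces to showing that $U = \sum_{i=1}^k Z_i^2$, the sum of squares of $k$ independent standard normals, has the $\mathcal{X}_k^2$ distribution.

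Next I would compute the MGF of a single squared standard normal: for $Z \sim \mathcal{N}(0,1)$ and any $t < 1/2$,
\[
\mathbb{E}\big[e^{tZ^2}\big] = \frac{1}{\sqrt{2\pi}} \int_{-\infty}^{\infty} e^{-(1-2t)z^2/2}\, dz = (1-2t)^{-1/2},
\]
which follows by recognizing the integrand as an unnormalized Gaussian density with variance $1/(1-2t)$. Because the $Z_i$ are independent, the MGF of $U$ factorizes, so $M_U(t) = \prod_{i=1}^k (1-2t)^{-1/2} = (1-2t)^{-k/2}$ for $t < 1/2$. This is precisely the MGF of the chi-squared distribution with $k$ degrees of freedom (equivalently a Gamma law with shape $k/2$ and scale $2$). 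Since $M_U$ is finite on an open interval around the origin, it determines the distribution uniquely, and I conclude $U \sim \mathcal{X}_k^2$.

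For the moments I would work directly at the level of the summands rather than differentiating $M_U$. Each $Z_i$ has mean zero, so $\mathbb{E}[Z_i^2] = \texttt{Var}[Z_i] = 1$, giving $\mathbb{E}[U] = k$ by linearity. For the variance, the standard-normal fourth moment $\mathbb{E}[Z_i^4] = 3$ yields $\texttt{Var}[Z_i^2] = \mathbb{E}[Z_i^4] - (\mathbb{E}[Z_i^2])^2 = 3 - 1 = 2$, and independence of the $Z_i^2$ then gives $\texttt{Var}[U] = \sum_{i=1}^k \texttt{Var}[Z_i^2] = 2k$.

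The algebra here is entirely routine; the one step that genuinely requires justification is the appeal to MGF uniqueness, that is, why matching the MGF on a neighborhood of $0$ (not merely matching finitely many moments) licenses the conclusion that $U$ is truly chi-squared. An alternative that sidesteps this is an inductive convolution argument on the densities, showing that the sum of independent $\mathcal{X}_1^2$ and $\mathcal{X}_{k-1}^2$ variables is $\mathcal{X}_k^2$ through the Gamma-density convolution identity, but the MGF route is cleaner and avoids the more delicate integrals.
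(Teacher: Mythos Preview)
Your argument is correct and is the standard textbook derivation via the MGF of a squared standard normal. Note, however, that the paper does not supply its own proof of this statement: it is quoted as a background result from \cite{AlexMood} in the ``Useful Theorems'' appendix and used without proof, so there is no in-paper argument to compare against.
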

The theorem above states that sum of the squares of $k$ standard normal random variables is a chi-squared distribution with $k$ degrees of freedom.

\begin{lemma}[Raw moment of Chi-Squared distribution]\label{lemma:rw_moment_chisquared}
Suppose $X \sim \mathcal{X}_k^2$. Then, the $m$-th raw moment of $X$ can be found as follows;

\begin{align}
    \mathbb{E}[X^m] = \prod_{i=0}^{m-1} (k + 2i)
\end{align}
\end{lemma}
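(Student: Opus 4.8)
The plan is to compute the moment directly from the density of the chi-squared law, using the fact that $\mathcal{X}_k^2$ is the Gamma distribution with shape $k/2$ and scale $2$. First I would write the density as $f(x) = \frac{1}{2^{k/2}\Gamma(k/2)}\, x^{k/2-1} e^{-x/2}$ for $x>0$, so that
\[
\mathbb{E}[X^m] = \frac{1}{2^{k/2}\Gamma(k/2)} \int_0^\infty x^{m+k/2-1} e^{-x/2}\, dx .
\]
The integral is a standard Gamma integral: the substitution $u = x/2$ gives $\int_0^\infty x^{a-1} e^{-x/2}\, dx = 2^{a}\,\Gamma(a)$ with $a = m + k/2$, hence $\mathbb{E}[X^m] = 2^{m}\,\Gamma(m+k/2)/\Gamma(k/2)$.

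Next I would telescope the Gamma ratio using the functional equation $\Gamma(z+1) = z\,\Gamma(z)$, applied $m$ times starting from $z = k/2$: this yields $\Gamma(m+k/2) = \Gamma(k/2)\prod_{i=0}^{m-1}(k/2+i)$. Substituting back gives $\mathbb{E}[X^m] = 2^{m}\prod_{i=0}^{m-1}(k/2+i) = \prod_{i=0}^{m-1}(k+2i)$, which is the claimed identity.

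An alternative I would keep in mind is the moment generating function route: since $M_X(t) = (1-2t)^{-k/2}$ for $t<1/2$, an easy induction shows $M_X^{(m)}(t) = \bigl[\prod_{i=0}^{m-1}(k+2i)\bigr](1-2t)^{-k/2-m}$, and evaluating at $t=0$ gives the result; equivalently, integrating by parts in the density integral yields the recursion $\mathbb{E}[X^m] = \bigl(k + 2(m-1)\bigr)\,\mathbb{E}[X^{m-1}]$, which one closes by induction with base case $\mathbb{E}[X^0]=1$. There is no genuine obstacle here — the only care needed is bookkeeping the powers of $2$ and the shift in the Gamma argument; as a sanity check, $m=1$ and $m=2$ reproduce $\mathbb{E}[X]=k$ and $\mathbb{E}[X^2]=k(k+2)$, consistent with $\texttt{Var}[X]=2k$ from \cref{thm:chisquared}.
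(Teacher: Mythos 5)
Your proposal is correct and follows essentially the same route as the paper's proof: write out the chi-squared density, evaluate the Gamma integral via the substitution $u=x/2$ to get $2^m\,\Gamma(m+k/2)/\Gamma(k/2)$, and telescope the Gamma ratio with $\Gamma(1+z)=z\,\Gamma(z)$. The MGF and integration-by-parts alternatives you mention are fine but not needed.
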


\begin{proof}
    From the definition of Chi-Squared distribution with $r$ degrees of reddom, $X$ has the following probability density function:

    \begin{align}
        f_{X}(x) = \frac{1}{2^{\frac{k}{2}} \Gamma(\frac{k}{2})} x^{\frac{k}{2}-1}  e^{-\frac{x}{2}}
    \end{align}

    Therefore, we have:

    \begin{align}
        \mathbb{E}[X^m] &= \frac{1}{2^{\frac{k}{2}} \Gamma(\frac{k}{2})} \int_0^{+\infty} x^{\frac{k}{2}+m-1} e^{-\frac{x}{2}} dx = \frac{2}{2^{\frac{k}{2}} \Gamma(\frac{k}{2})} \int_0^{+\infty} (2u)^{\frac{k}{2}+m-1} e^{-u} du \nonumber \\
        & =\frac{2^{\frac{k}{2}+m-1+1}}{2^{\frac{k}{2}} \Gamma(\frac{k}{2})} \int_0^{+\infty} u^{\frac{k}{2}+m-1} e^{-u} du = \frac{2^m}{\Gamma(\frac{k}{2})}\Gamma (\frac{k}{2}+m) = \frac{2^m \Gamma(\frac{k}{2})}{\Gamma(\frac{k}{2})}\prod_{i=0}^{m-1} (\frac{k}{2}+i) \nonumber \\
        &= \prod_{i=0}^{m-1} (k+2i).
    \end{align}
    Note that the fifth equality directly results from the property of gamma function that for $z>0$, $\Gamma(1+z) = z\Gamma(z)$.
\end{proof}

\begin{theorem}[Classical Central Limit Theorem: \citep{patrickB}, Theorem 27.1]\label{thm:CLT}
Suppose that $\{X_i\}_{i=1}^n$, is an independent sequence of random variables having the same distribution with mean $\mu$ and positive variance $\sigma^2$. Define $S_n = \sum_{i=1}^n X_i$ as their sum. Let $Z_n$ be defined by

\begin{align}
    Z_n = \frac{S_n - n \mu}{\sqrt{n} \sigma}.
\end{align}

Then, the distribution of $Z_n$ approaches standard normal distribution as $n$ approaches infinity. 

\end{theorem}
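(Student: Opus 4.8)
The plan is to use the classical characteristic-function argument together with L\'evy's continuity theorem. First I would reduce to the centered, unit-variance case by setting $Y_i = (X_i - \mu)/\sigma$: then the $Y_i$ are i.i.d.\ with $\mathbb{E}[Y_i]=0$, $\texttt{Var}[Y_i]=1$, and $Z_n = \tfrac{1}{\sqrt{n}}\sum_{i=1}^n Y_i$, so it suffices to prove that $Z_n$ converges in distribution to $\mathcal{N}(0,1)$ for such $Y_i$. Let $\varphi(t) = \mathbb{E}[e^{itY_1}]$ be the common characteristic function.

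Because $Y_1$ has a finite second moment, $\varphi$ is twice continuously differentiable with $\varphi(0)=1$, $\varphi'(0) = i\,\mathbb{E}[Y_1] = 0$ and $\varphi''(0) = -\mathbb{E}[Y_1^2] = -1$, so Taylor's theorem around $0$ gives $\varphi(t) = 1 - \tfrac{t^2}{2} + o(t^2)$ as $t\to 0$. By independence, $\varphi_{Z_n}(t) = \varphi\!\big(t/\sqrt{n}\big)^{n} = \big(1 - \tfrac{t^2}{2n} + o(1/n)\big)^{n}$, and for each fixed $t$ the elementary fact that $(1 + a_n/n)^n \to e^{a}$ whenever $a_n\to a$ yields $\varphi_{Z_n}(t) \to e^{-t^2/2}$ pointwise. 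Since $t\mapsto e^{-t^2/2}$ is the characteristic function of the standard normal law and is continuous at $t=0$, L\'evy's continuity theorem gives $Z_n \Rightarrow \mathcal{N}(0,1)$, which is exactly the assertion that the distribution of $Z_n$ approaches the standard normal as $n\to\infty$.

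The one delicate point is the limit passage in $\varphi_{Z_n}(t) = \big(1 - \tfrac{t^2}{2n} + o(1/n)\big)^{n}$, where the $o(1/n)$ remainder must be controlled uniformly enough in $n$; the standard device is the inequality $\big|\prod_{k} z_k - \prod_{k} w_k\big| \le \sum_{k} |z_k - w_k|$ for complex numbers of modulus at most $1$, together with the quantitative bound $\big|\varphi(t) - (1 - \tfrac{t^2}{2})\big| \le \mathbb{E}\big[\min\!\big(|tY_1|^3,\,(tY_1)^2\big)\big]$, whose right-hand side is $o(t^2)$ by dominated convergence (dominate by $(tY_1)^2$ and use $\mathbb{E}[Y_1^2]<\infty$). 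An alternative that avoids complex analysis is Lindeberg's replacement method: for a smooth compactly supported test function $f$, compare $\mathbb{E}[f(Z_n)]$ with $\mathbb{E}[f(W)]$, $W\sim\mathcal{N}(0,1)$, by swapping the $Y_i$ one at a time with independent standard Gaussians and Taylor-expanding each swap to third order, so that the zeroth-, first- and second-order terms cancel by moment matching and the cubic remainder sums to $\mathcal{O}(1/\sqrt{n})$. Either route is self-contained and uses no result stated earlier in the paper; this classical statement (and, in its non-i.i.d.\ Lindeberg form) is exactly the kind of tool invoked for asymptotic-normality claims such as \cref{lem:noise_var}.
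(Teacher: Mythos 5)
The paper states this theorem without proof, citing it directly from Billingsley (Theorem 27.1), so there is no in-paper argument to compare against. Your characteristic-function proof — centering and scaling, the second-order Taylor expansion of $\varphi$ with the $\mathbb{E}[\min(|tY_1|^3,(tY_1)^2)]$ remainder bound, and L\'evy's continuity theorem — is correct and is essentially the canonical proof given in the cited reference itself, so it is a faithful (if unnecessary for the paper's purposes) justification of the statement.
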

The theorem above states that $ S_n$ is approximately, or asymptotically, distributed as a normal distribution with mean $n\mu$ and variance $n \sigma^2$.

The next theorem is about the Lindeberg's condition, which is a sufficient (and under certain conditions also a necessary condition) for the Central Limit Theorem (CLT) to hold for a sequence of independent random variables $\{X_i\}_{i=1}^n$. Unlike the classical CLT stated above, which requires the sequence of random variables to have a finite variance and be both independent and identically distributed (\emph{i.i.d}), Lindeberg's CLT only requires the sequence of random variables to have finite variance, be independent and also satisfy the Lindeberg's condition. The following states the theorem.

\begin{theorem}[Lindeberg and Lyapounov Theorem: \citep{patrickB}, Theorem 27.2]\label{thm:Lindeberg_CLT}
Suppose $X_1, \ldots, X_n$ are $n$ independent random variables with $\mathbb{E}[X_i] = \mu_i$ and $\texttt{Var}[X_i]=\sigma_i^2>0$. Define $S_n = \sum_{i=1}^n X_i$ and let  $s_n^2 = \sum_{i=1}^n \sigma_i^2$. 
Also assume the following condition holds for all $\epsilon >0$:

\begin{align}
    \textit{Lindeberg's condition:}~~~\lim_{n \to \infty} \sum_{i=1}^n \frac{1}{s_n^2} \int_{|x-\mu_i|\geq \epsilon s_n} (x - \mu_i)^2 P_{X_i}(x) dx =  0.
\end{align}

where $P_{X_i}$ is the pdf of variable $X_i$. Assuming $Z_n =\frac{S_n - \sum_{i=1}^n \mu_i}{s_n}$, the distribution of $Z_n$ approaches standard normal distribution as $n$ approaches infinity. 

\end{theorem}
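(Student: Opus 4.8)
The plan is to establish the convergence in distribution through characteristic functions together with L\'evy's continuity theorem. First I would reduce to the centered case: replacing each $X_i$ by $X_i - \mu_i$ leaves $s_n^2$, the Lindeberg sum, and the law of $Z_n = (S_n - \sum_i \mu_i)/s_n$ unchanged, so I may assume $\mu_i = 0$. Setting $Y_{n,i} = X_i/s_n$ gives $Z_n = \sum_{i=1}^n Y_{n,i}$, a sum of independent variables with $\mathbb{E}[Y_{n,i}] = 0$, $\sum_{i=1}^n \mathbb{E}[Y_{n,i}^2] = 1$, and Lindeberg's condition recast as $\sum_{i=1}^n \mathbb{E}[Y_{n,i}^2 \mathbf{1}\{|Y_{n,i}| \ge \epsilon\}] \to 0$ for every $\epsilon > 0$. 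The target is then to show that the characteristic function $\phi_{Z_n}(t) = \prod_{i=1}^n \mathbb{E}[e^{it Y_{n,i}}]$ converges to $e^{-t^2/2}$ for each fixed $t \in \mathbb{R}$, since $e^{-t^2/2}$ is the characteristic function of the standard normal and L\'evy's theorem then yields weak convergence.

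A preliminary step records that Lindeberg's condition forces the negligibility estimate $\max_{1 \le i \le n} \mathbb{E}[Y_{n,i}^2] \to 0$: indeed $\mathbb{E}[Y_{n,i}^2] \le \epsilon^2 + \mathbb{E}[Y_{n,i}^2 \mathbf{1}\{|Y_{n,i}| \ge \epsilon\}]$, so taking the maximum over $i$, then letting $n \to \infty$ and $\epsilon \to 0$, gives the claim. Next I would control each factor through the Taylor estimate $|e^{ix} - (1 + ix - x^2/2)| \le \min\{|x|^3/6,\, x^2\}$, valid for all real $x$. Applying it with $x = t Y_{n,i}$ and taking expectations gives $\phi_{Y_{n,i}}(t) = 1 - \tfrac{t^2}{2}\mathbb{E}[Y_{n,i}^2] + R_{n,i}$, where $|R_{n,i}| \le \mathbb{E}[\min\{|t Y_{n,i}|^3/6,\, t^2 Y_{n,i}^2\}]$.

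The crux is to show $\sum_i R_{n,i} \to 0$ by splitting each expectation at the threshold $|Y_{n,i}| = \epsilon$. On $\{|Y_{n,i}| < \epsilon\}$ I use the cubic bound, which sums to $\sum_i |t|^3 \epsilon\, \mathbb{E}[Y_{n,i}^2]/6 = |t|^3 \epsilon /6$; on $\{|Y_{n,i}| \ge \epsilon\}$ I use the quadratic bound, whose sum is exactly $t^2 \sum_i \mathbb{E}[Y_{n,i}^2 \mathbf{1}\{|Y_{n,i}| \ge \epsilon\}] \to 0$ by Lindeberg's condition. Hence $\limsup_n |\sum_i R_{n,i}| \le |t|^3\epsilon/6$ for every $\epsilon$, so $\sum_i R_{n,i} \to 0$. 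I then pass from factors to product via the elementary inequality $|\prod_i z_i - \prod_i w_i| \le \sum_i |z_i - w_i|$ for complex numbers of modulus at most $1$: with $z_i = \phi_{Y_{n,i}}(t)$ and $w_i = 1 - \tfrac{t^2}{2}\mathbb{E}[Y_{n,i}^2]$ (both of modulus $\le 1$ for large $n$ by negligibility), the difference is bounded by $\sum_i |R_{n,i}| \to 0$, while $\prod_i (1 - \tfrac{t^2}{2}\mathbb{E}[Y_{n,i}^2]) \to e^{-t^2/2}$ because $\sum_i \mathbb{E}[Y_{n,i}^2] = 1$ and $\max_i \mathbb{E}[Y_{n,i}^2] \to 0$. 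I expect the threshold split of the remainder to be the main obstacle: this is precisely where Lindeberg's condition enters essentially, and the delicate point is combining the cubic regime for small values with the quadratic regime for large values into a bound that vanishes after taking $n \to \infty$ and then $\epsilon \to 0$. As an alternative avoiding characteristic functions, I note the Lindeberg replacement (telescoping) method, which swaps each $Y_{n,i}$ for an independent Gaussian of matching variance and Taylor-expands a smooth test function; it relies on the same remainder split and the same negligibility estimate, and gives weak convergence directly for readers who prefer that route.
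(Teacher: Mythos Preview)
Your proof sketch is correct and follows the classical characteristic-function route to the Lindeberg CLT: reduce to mean zero, normalize to a triangular array with total variance one, Taylor-expand each factor of the characteristic function, split the remainder at a threshold $\epsilon$ so that the cubic bound controls the small-value part and Lindeberg's condition kills the large-value part, and finally pass from factors to product using negligibility $\max_i \mathbb{E}[Y_{n,i}^2]\to 0$. All the ingredients are right, including the remainder bound $|e^{ix}-(1+ix-x^2/2)|\le \min\{|x|^3/6,\,x^2\}$ and the product-comparison inequality.

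However, there is nothing in the paper to compare against: the paper does not prove \cref{thm:Lindeberg_CLT}. It is listed in the ``Useful Theorems'' appendix and simply cited from Billingsley (Theorem~27.2) as a known result to be used later in the proof of \cref{lem:noise_var}. For what it is worth, the argument you outline is precisely the one Billingsley gives, so your proposal is consistent with the cited source; the alternative Lindeberg swapping method you mention is also standard and would work equally well.
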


The theorem above states that, given that Lindeberg's condition is satisfied, $ S_n$ is approximately, or asymptotically, distributed as a normal distribution with mean $\sum_{i=1}^n \mu_i$ and variance $s_n^2$, even if the sequence of variables are not identically distributed.




\section{Analysis of the additive noise term in \Cref{eq:param_update_noise} with Kaiming-uniform initialization of $\{\mathbf{A}_\ell\}_{\ell=1}^L$} \label{app:kaiming}

Kaiming uniform initialization \citep{kaiming}, is a weight initialization technique primarily designed for neural networks employing \texttt{ReLU}-like activation functions. While it was originally derived for \texttt{ReLU}, its principles can be extended to other non-linear activations like \texttt{GELU}, which is a smoother variant of \texttt{ReLU}.
The core idea of Kaiming initialization is to set the initial weights in a way that helps maintain a consistent variance of activations throughout the network, preventing vanishing or exploding gradients during training. For uniform distribution, the weights are drawn from a uniform distribution within a specific range $\mathcal{U}(-b, b)$, where $b=\sqrt{6/m}$. The value of $b$ is found for each layer such that the variance of each subsequent pre-activation layer is equal, e.g., in \Cref{fig:lora}, the variance of each element in $y$ is equal to that of each element in the pre-activation of input $x$. However, this initialization does not minimize the magnitude of the \LoRA injected noise in 
\Cref{eq:param_update_noise}. We have the following lemma.

\begin{restatable}{lemma}{noisevarkaiming}\label{lem:noise_var_kaiming}
Let $\mathbf{A}_{\ell} \in \mathbb{R}^{r\times m}$ be a matrix with \emph{i.i.d} entries sampled from $\mathcal{U}(-b, b)$. Given a fixed $\mathbf{q}\in \mathbb{R}^{1 \times m}$,  the distribution of the $\mathbf{q}\cdot (\mathbf{A}_{\ell}^\top \mathbf{A}_{\ell}-\mathbb{I}_m) \in \mathbb{R}^{1 \times m}$ approaches the $m$-dimensional Gaussian distribution $\mathcal{N}\big((r\sigma_{\mathbf{A}}^2 - 1)~\mathbf{q}, ~r\sigma_{\mathbf{A}}^4 (\|\mathbf{q}\|_2^2 ~ \mathbb{I}_m + \mathbf{D})\big)$, as $m$ increases. Also, $\sigma_{\mathbf{A}}^2=b^2/3$ is the variance of each element in $\mathbf{A}_{\ell}$ and $\mathbf{D}\in \mathbb{R}^{m\times m}$ is a diagonal matrix with $\mathbf{D}_{i,i}=-\mathbf{q}_i^2/5$. 
\end{restatable}

This result can be extended to matrices multiplication, as in \cref{eq:param_update_noise}: for $A \in \mathbb{R}^{r\times m}$ sampled from $\mathcal{U}(-b, b)$ and $\mathbf{Q} \in \mathbb{R}^{n\times m}$, as $m$ grows, the product $\mathbf{G} = \mathbf{Q} \cdot (\mathbf{A}_{\ell}^\top \mathbf{A}_{\ell}-\mathbb{I}_m) \in \mathbb{R}^{n\times m}$ approaches a Gaussian distribution, where $\mathbf{G}_{i,:}$ ($i$-th row of $\mathbf{G}$, $1\leq i\leq n$) has distribution $\mathcal{N}\big((r\sigma_{\mathbf{A}}^2-1)[\mathbf{Q}]_{i,:}, ~ r\sigma_{\mathbf{A}}^4 (\|[\mathbf{Q}]_{i,:}\|_2^2 ~ \mathbb{I}_m + \mathbf{D} \big)$, where $[\mathbf{Q}]_{i,:}$ is the $i$-th row of $\mathbf{Q}$ and $\mathbf{D}\in \mathbb{R}^{m\times m}$ is the diagonal matrix built based on it: $\mathbf{D}_{j,j} = -\mathbf{Q}_{i,j}^2/5$.

Replacing $\mathbf{Q}$ above with $\nabla_{\mathbf{W}_{\ell}^t} \mathcal{L}^t$, we get to the noise summand in \Cref{eq:param_update_noise}: the product $\mathbf{G} = \nabla_{\mathbf{W}_{\ell}^t} \mathcal{L}^t \cdot (\mathbf{A}_{\ell}^\top \mathbf{A}_{\ell} - \mathbb{I}_m)\in \mathbb{R}^{n\times m}$ approaches a Gaussian distribution, where $\mathbf{G}_{i,:}$ ($i$-th row of $\mathbf{G}$, $1\leq i\leq n$) has distribution $\mathcal{N}\big((r\sigma_{\mathbf{A}}^2-1)[\nabla_{\mathbf{W}_{\ell}^t} \mathcal{L}^t]_{i,:}, ~ r\sigma_{\mathbf{A}}^4 (\|[\nabla_{\mathbf{W}_{\ell}^t} \mathcal{L}^t]_{i,:}\|_2^2 ~ \mathbb{I}_m + \mathbf{D}) \big)$, where $[\nabla_{\mathbf{W}_{\ell}^t} \mathcal{L}^t]_{i,:}$ is the $i$-th row of $\nabla_{\mathbf{W}_{\ell}^t} \mathcal{L}^t$ and $\mathbf{D}\in \mathbb{R}^{m\times m}$ is the diagonal matrix built based on it: $\mathbf{D}_{j,j} = -[\nabla_{\mathbf{W}_{\ell}^t} \mathcal{L}^t]_{i,j}^2/5$. 
With Kaiming-uniform initialization $b=\sqrt{6/m}$, $\sigma_{\mathbf{A}}^2 = b^2/3 = 2/m$. Therefore, $r\sigma_{\mathbf{A}}^2 - 1 = 2r/m - 1$. In other words, Kaiming-uniform initialization leads to the injection of a \textbf{\emph{biased}} noise into the batch gradients: at each step $t$, distribution of $\nabla_{\mathbf{W}_{\ell}^t} \mathcal{L}^t\cdot (\mathbf{A}_{\ell}^\top \mathbf{A}_{\ell}-\mathbb{I}_m)$ is centered around $(\frac{2r}{m}-1) \nabla_{\mathbf{W}_{\ell}^t} \mathcal{L}^t \approx - \nabla_{\mathbf{W}_{\ell}^t} \mathcal{L}^t$ with a small variance. This means that the noise term in \Cref{eq:param_update_noise} is unbiased and almost cancels the batch gradient term at every time step $t$ and slows down the fine-tuning procedure leading to low utility.

\section{Further analysis of the additive noise term in \Cref{eq:param_update_noise} with Gaussian analysis of $\{\mathbf{A}_\ell\}_{\ell=1}^L$}\label{app:complete_analysis}

Although \cref{lem:noise_var} was proved when $m$ approaches infinity, in practical scenarios $m$ is limited (the input dimension of the adaptation layer). Hence, the distribution of the injected noise is not a pure Gaussian.

\subsection{Bounding the distance between the additive noise in \Cref{eq:param_update_noise} to Gaussian noise}\label{app:gaussian_deviation}

In this section, we provide more interesting details about the Gaussian initialization of $\mathbf{A}_{\ell}$, as it is a common practice. Having proved \cref{lem:noise_var} when $m$ approaches infinity, yet we need to quantify the distance between the distribution of $\mathbf{q}\cdot (\mathbf{A}_{\ell}^\top \mathbf{A}_{\ell}-\mathbb{I}_m) \in \mathbb{R}^{1 \times m}$ (for a given $\mathbf{q}\in \mathbb{R}^{1 \times m}$) and the Gaussian distribution with the same variance for limited values of $m$. In this section, we assume $\sigma_{\mathbf{A}}^2 = 1/r$, and we derive a closed form upper-bound for the total variation distance between the distribution of $\mathbf{q}\cdot (\mathbf{A}_{\ell}^\top \mathbf{A}_{\ell}-\mathbb{I}_m) \in \mathbb{R}^{1 \times m}$ and the Gaussian distribution $\mathcal{N}\big(\mathbf{0}, ~(\|\mathbf{q}\|_2^2~\mathbb{I}_m + \mathbf{D})/r \big)$. 

Suppose $X_1, \ldots, X_n$ are $n$ independent random variables that are not necessarily \emph{i.i.d}, and $\mathbb{E}[X_i] = 0$ (mean) and $\texttt{Var}[X_i]=\sigma_i^2>0$ (variance). Define $S_n = \sum_{i=1}^n X_i$ and let  $s_n^2 = \sum_{i=1}^n \sigma_i^2$. Assuming $Z_n =S_n/s_n$, and having Lindeberg's condition satisfied (see \cref{thm:CLT} and \cref{thm:Lindeberg_CLT} in the appendix), the normalized sum $Z_n$ has standard normal distribution in a weak sense for a bounded $n$. More precisely, the closeness of the cumulative distribution function (CDF) $F_n(x)=\texttt{Pr}\{Z_n\leq x\}$ to the standard normal CDF has been studied in terms of the Lyapounov ratios:

\begin{align}\label{eq:lyapounov_ratios}
    L_t = \frac{\sum_{i=1}^n \mathbb{E}[|X_i|^t]}{s_n^{t}}.
\end{align}

Particularly, if all $\{X_i\}_{i=1}^n$ are \emph{i.i.d} and have a finite third absolute moment $\mathbb{E}[|X_i|^3]$, the classical Berry-Esseen theorem \citep{Esseen1945FourierAO, feller1971, Petrov1975} bounds the Kolmogrov distance between $F_n(x)$ and $\Phi(x)$: 

\begin{align}
    \sup_x |F_n(x) - \Phi(x)| \leq CL_3,
\end{align} 

where $C$ is an absolute constant. In the more general case, when $\{X_i\}_{i=1}^n$ are independent (and not necessarily \emph{i.i.d}), the number of summand variables $n$ implicitly affects the value of $L_3$. Yet, we can bound the difference between $F_n(x)$ and $\Phi(x)$ in terms of generally stronger distances of total variation or entropic distance \citep{Bobkov2011}. To this end, let $D(X_i)$ denote the KL divergence between $X_i$ and a normal variable from $\mathcal{N}(0, \sigma_i^2)$, i.e., the KL divergence between $X_i$ and a normal variable with the same variance. Therefore, we have the following theorem about the distance between $F_n$ and $\Phi$:

\begin{theorem}[\citet{Bobkov2011}, theorem 1.1]
\label{thm:tv_distance}
Assume that the independent random variables $X_1, \ldots, X_n$ have finite third absolute moments, and that $D(X_i)\leq D$, where $D$ is a non-negative number. Then, 

\begin{align}\label{eq:bobkov_bound}
    \|F_n(x)-\Phi(x)\|_{TV} \leq C_D L_3,
\end{align}
where the constant $C_D$ depends on $D$ only and $\|F_n(x)-\Phi(x)\|_{TV} = \sup_A \big|\int_A dF_n- \int_A d\Phi \big|$ is the total variation distance between $F_n$ and $\Phi$.

\end{theorem}

Having the theorem above, we can now derive a Berry-Esseen type bound for the total variation distance between each element of $\mathbf{q}\cdot (\mathbf{A}_{\ell}^\top \mathbf{A}_{\ell}-\mathbb{I}_m) \in \mathbb{R}^{1 \times m}$ in \cref{lem:noise_var} and the normal law $\mathcal{N}(0, \|\mathbf{q}\|^2/r)$: we need to find the third Lyapounov ratio for the summands contributing to each of the $m$ elements in $\mathbf{q}\cdot (\mathbf{A}_{\ell}^\top \mathbf{A}_{\ell}-\mathbb{I}_m) \in \mathbb{R}^{1 \times m}$. To this end, we state and prove the following lemma:

\begin{restatable}{lemma}{TVdist}\label{lem:total_var}
Let $A \in \mathbb{R}^{r\times m}$ be a matrix with \emph{i.i.d} entries sampled from $\mathcal{N}(0, \frac{1}{r})$. Given a fixed $\mathbf{q}\in \mathbb{R}^{1 \times m}$, let $u = \mathbf{q}\cdot (\mathbf{A}_{\ell}^\top \mathbf{A}_{\ell}-\mathbb{I}_m) \in \mathbb{R}^{1 \times m}$. Let $u_i$ be the $i$-th element of $u$ and $Q_m(x) = \texttt{Pr}\{u_i\leq x\}$, the CDF of $u_i$. Also, let $\Phi(x)$ be the CDF of $z\sim \mathcal{N}(0, \frac{\|\mathbf{q}\|^2 + \mathbf{q}_i^2}{r})$. Then:

\begin{align}
    \|Q_m(x)-\Phi(x)\|_{TV} \in \mathcal{O}\bigg(\frac{1}{\sqrt{r}}\bigg).
\end{align}
\end{restatable}

The above result shows that the distribution of $\mathbf{q}\cdot (\mathbf{A}_{\ell}^\top \mathbf{A}_{\ell}-\mathbb{I}_m) \in \mathbb{R}^{1 \times m}$ gets closer and closer to $\mathcal{N}\big(\mathbf{0}, ~(\|\mathbf{q}\|_2^2~\mathbb{I}_m + \mathbf{D})/r \big)$ as $m$ and $r$ increase. We can now benefit from the useful coupling characterization of the total variation distance to establish a more understandable interpretation of the above result.

\paragraph{The coupling characterization of the total variation distance.}
For two distributions $P$ and $Q$, a pair of random variables $(X, Y)$, which are defined on the same probability space, is called a coupling for $P$ and $Q$ if $X\sim P$ and $Y\sim Q$ \citep{Levin2008MarkovCA, abbas}. A very useful property of total variation distance is the coupling characterization \citep{Levin2008MarkovCA}: 

$\|P-Q\|_{TV}\leq t$ if and only if there exists a coupling $(X,Y)$ for them such that $\texttt{Pr}\{X\neq Y\}\leq t$. 

We can use the coupling characterization to prove the following lemma from \cref{lem:total_var}.

\begin{restatable}{lemma}{TVresult}\label{lem:coupling}
Let $A \in \mathbb{R}^{r\times m}$ be a matrix with \emph{i.i.d} entries sampled from $\mathcal{N}(0, \frac{1}{r})$. Given a fixed $\mathbf{q}\in \mathbb{R}^{1 \times m}$, let $u = \mathbf{q}\cdot (\mathbf{A}_{\ell}^\top \mathbf{A}_{\ell}-\mathbb{I}_m) \in \mathbb{R}^{1 \times m}$. Let $u_i$ be the $i$-th element of $u$. There exists a coupling $(u_i, z)$, where $z\sim \mathcal{N}(0, \frac{\|\mathbf{q}\|^2 + \mathbf{q}_i^2}{r})$ and

\begin{align}
    \texttt{Pr}\{u_i \neq z\} \in  \mathcal{O}\bigg(\frac{1}{\sqrt{r}}\bigg),
\end{align}
\end{restatable}

with a small multiplicative factor (see the discussion after \Cref{eq:tv} in the appendix). The lemma above means that the $i$-th element  of $\mathbf{q}\cdot (\mathbf{A}_{\ell}^\top \mathbf{A}_{\ell}-\mathbb{I}_m)$ follows a mixture of distributions: $\mathcal{N}(0, \frac{\|\mathbf{q}\|^2 + \mathbf{q}_i^2}{r})$ with weight $w_g$ and another distribution $M$ - which we dont know - with weight $(1 - w_g) \in \mathcal{O}\big(\frac{1}{\sqrt{r}}\big)$. 
The larger $r$, the closer the mixture distribution gets to pure Gaussian distribution $\mathcal{N}(0, \frac{\|\mathbf{q}\|^2 + \mathbf{q}_i^2}{r})$. Having established the results above, we can now make a clear comparison between the dynamics of low-rank adaptation of $\{\mathbf{W}_{\ell}^t\}_{\ell=1}^L$ and the dynamics of \DPSGD (with non-uniform noise addition) w.r.t. $\{\mathbf{W}_{\ell}^t\}_{\ell=1}^L$.

\section{Proofs}

\subsection{Proof of \Cref{lem:DPofRS}}\label{app:proofofdpofrs}
\DPofRS*

    Our proof is built upon the following lemma for the R\'enyi divergence between two multi-variate Gaussian distributions:

    \begin{lemma}[\citet{GIL2013124}]\label{lem:gaussiansrenyi}
    Let $\mathcal{M}_1 \sim \mathcal{N}(\mathbf{\mu}_1, \mathbf{\Sigma}_1$ and $\mathcal{M}_2 \sim \mathcal{N}(\mathbf{\mu}_2, \mathbf{\Sigma}_2)$. Then, the R\'enyi divergence between the two can be computed as:

    \begin{align}\label{eq:alpharenyi}
        D_{\alpha}(\mathcal{M}_1||\mathcal{M}_2) = \frac{\alpha}{2}(\mathbf{\mu}_1 - \mathbf{\mu}_2)^\top \big(\mathbf{\Sigma}_1 + \alpha (\mathbf{\Sigma}_2 - \mathbf{\Sigma}_1)\big)^{-1}(\mathbf{\mu}_1 - \mathbf{\mu}_2) -\frac{1}{2(\alpha -1)} \log\bigg(\frac{\det(\mathbf{\Sigma}_1 + \alpha (\mathbf{\Sigma}_2 - \mathbf{\Sigma_1}))}{(\det(\mathbf{\Sigma}_1))^{1-\alpha} (\det(\mathbf{\Sigma_2}))^\alpha}\bigg)
    \end{align}
    for all $\alpha$ such that $\alpha \Sigma_1^{-1} + (1-\alpha)\Sigma_2^{-1} \succ 0$.

    \end{lemma}

\begin{proof}
The $r$ columns of $\mathcal{M}(\mathbf{Q}(\mathcal{D}))$ are \emph{i.i.d} multi-variate ($n$-dimensional) Gaussian variables with mean zero and covariance matrix $\mathbf{\Sigma}_1 = \sigma_A^2 \mathbf{Q}(\mathcal{D})\mathbf{Q}(\mathcal{D})^\top + \sigma_g^2 \mathbb{I}_n$. Similarly, the $r$ columns of $\mathcal{M}(\mathbf{Q}(\mathcal{D}'))$ are \emph{i.i.d} multi-variate ($n$-dimensional) Gaussian variables with mean zero and covariance matrix $\mathbf{\Sigma}_2 = \sigma_A^2 \mathbf{Q}(\mathcal{D}')\mathbf{Q}(\mathcal{D}')^\top + \sigma_g^2 \mathbb{I}_n$. Therefore, $\mathbf{\Sigma}_1, \mathbf{\Sigma}_2 \in \mathbb{R}^{n\times n}$ are both positive semi-definite matrices. Consequently, if we concatenate the $r$ columns of $\mathcal{M}(\mathbf{Q}(\mathcal{D}))$ and $\mathcal{M}(\mathbf{Q}(\mathcal{D}'))$ into two $nr$-dimensional vectors, they will have the following distributions;

\begin{align*}
    \text{vec}{(\mathcal{M}(\mathbf{Q}(\mathcal{D})))} \sim \mathcal{N}(\mathbf{0}, \mathbb{I}_r \otimes\mathbf{\Sigma}_1), \\
    \text{vec}{(\mathcal{M}(\mathbf{Q}(\mathcal{D}')))} \sim \mathcal{N}(\mathbf{0}, \mathbb{I}_r \otimes\mathbf{\Sigma}_2),
\end{align*}
where $\otimes$ denotes the Kronecker product between two matrices. Therefore, based on \Cref{lem:gaussiansrenyi}, the R\'enyi divergence between $\mathcal{M}(\mathbf{Q}(\mathcal{D}))$ and $\mathcal{M}(\mathbf{Q}(\mathcal{D}'))$ can be written as:

\begin{align}\label{eq:renyiddprime}
    D_{\alpha}(\mathcal{M}(\mathbf{Q}(\mathcal{D}))||\mathcal{M}(\mathbf{Q}(\mathcal{D}'))) = \frac{-r}{2(\alpha - 1)} \log \bigg(\frac{\det (\mathbf{\Sigma_1} + \alpha \Delta \mathbf{\Sigma})}{(\det (\mathbf{\Sigma_1}))^{1-\alpha} (\det (\mathbf{\Sigma_2}))^\alpha}\bigg).
\end{align}

The matrix $\Delta \mathbf{\Sigma} \in \mathbb{R}^{n\times n}$ is symmetric with $n$ real eigenvalues. Due to closeness of $ \mathbf{\Sigma}_1$ and $ \mathbf{\Sigma}_2$, without loss of generality, we can assume that $\Delta \mathbf{\Sigma}$ has rank $\tau \leq n$. So we can write $\Delta \mathbf{\Sigma} = \mathbf{V} \mathbf{\Lambda}\mathbf{V}^\top$, where $\mathbf{V} \in \mathbb{R}^{n \times n}$ contains eigenvectors of $\Delta \mathbf{\Sigma}$ and is orthonormal ($\mathbf{V}^\top = \mathbf{V}^{-1}$). Also, the diagonal matrix $\mathbf{\Lambda} \in \mathbb{R}^{n \times n}$ contains the real eigen values of $\Delta \mathbf{\Sigma}$ (which could be positive or negative). Let us, without loss of generality, assume that the first $\tau$ diagonal elements of $\Lambda$ are non-zero and the rest are zero (remember we assumed previously that the rank of $\Delta \mathbf{\Sigma}$ is $\tau$) and the first $\tau^+$ diagonal elements are positive and the rest of the $\tau^-=\tau-\tau^+$ elements are negative. Therfore, we can split $\mathbf{\Lambda}\in \mathbb{R}^{n \times n}$ as $\mathbf{\Lambda}= \mathbf{\Lambda}^+ - \mathbf{\Lambda}^-$, where $\mathbf{\Lambda}^+$ includes only the positive eigen values and $\mathbf{\Lambda}^-$ includes only the \emph{absolute value} of negative eigen values, as shown below:

\begin{align}
    \Lambda = \Lambda^+ - \Lambda^- = 
  \begin{bmatrix}
    \color{blue}{\lambda_1} & & & & & & & & & \\
     & \color{blue}{\ddots}& & & & &0 & & & \\
     & & \color{blue}{\lambda_{\tau^+}}& & & & & & & \\
     & & & \hspace{-0.6cm}\color{brown}{0}& & & & & & \\
     & & & & \hspace{-0.2cm}\color{brown}{\ddots}& & & & & \\
     & & & & & \color{brown}{0}& & & & \\
     & & & & & & 0& & & \\
     & & 0& & & & & \ddots& & \\
     & & & & & & & & & \hspace{-0.5cm}0\\
  \end{bmatrix} -
  \begin{bmatrix}
    \color{blue}{0} & & & & & & & & & \\
     & \color{blue}{\ddots}& & & & &0 & & & \\
     & & \color{blue}{0}& & & & & & & \\
     & & & \hspace{-0.2cm} \color{brown}{\lambda_{\tau^++1}}& & & & & & \\
     & & & & \hspace{-0.4cm}\color{brown}{\ddots}& & & & & \\
     & & & & & \color{brown}{\lambda_{\tau}}& & & & \\
     & & & & & & \hspace{-0.2cm}0& & & \\
     & & 0& & & & & \ddots& & \\
     & & & & & & & & & \hspace{-0.4cm}0\\
  \end{bmatrix}
\end{align}

Therefore, we can rewrite $\Delta \mathbf{\Sigma}$ as:
\begin{align}
    \mathbf{\Sigma_1} + \Delta \mathbf{\Sigma} = \mathbf{\Sigma_1} + \mathbf{V}\mathbf{\Lambda}\mathbf{V}^\top & = \mathbf{\Sigma_1} + \mathbf{V}\mathbf{\Lambda^{+}}\mathbf{V}^\top - \mathbf{V}\mathbf{\Lambda^{-}}\mathbf{V}^\top = \mathbf{\Sigma_1} + \Delta \mathbf{\Sigma}^+ - \Delta \mathbf{\Sigma}^-
    \nonumber \\
    &= \mathbf{\Sigma_1} + (\mathbf{V}\mathbf{\Lambda^{+^\frac{1}{2}}})(\mathbf{V}\mathbf{\Lambda^{+^\frac{1}{2}}})^\top - (\mathbf{V}\mathbf{\Lambda^{-^\frac{1}{2}}})(\mathbf{V}\mathbf{\Lambda^{-^\frac{1}{2}}})^\top,
\end{align}

where $\Delta \mathbf{\Sigma}^+ := \mathbf{V}\mathbf{\Lambda}^+\mathbf{V}^\top$ and $\Delta \mathbf{\Sigma}^- := \mathbf{V}\mathbf{\Lambda}^-\mathbf{V}^\top$ are both positive semi-definite matrices. Since $\Delta \mathbf{\Sigma}^+$ and $\Delta \mathbf{\Sigma}^-$ are formed from orthonormal projections, we have: $\|\Delta \mathbf{\Sigma}\|_F^2 = \|\Delta \mathbf{\Sigma}^+\|_F^2 + \|\Delta \mathbf{\Sigma}^-\|_F^2$. Therefore

\begin{align}\label{eq:norm_ineq}
    \|\Delta \mathbf{\Sigma}^+\|_F^2 \leq \|\Delta \mathbf{\Sigma}\|_F^2, \nonumber \\
    \|\Delta \mathbf{\Sigma}^-\|_F^2 \leq \|\Delta \mathbf{\Sigma}\|_F^2
\end{align}

We now find the terms appearing in \Cref{eq:renyiddprime}, as follows:

\begin{align}
    \det(\mathbf{\Sigma_1} + \Delta \mathbf{\Sigma}) = \det \bigg(\mathbf{\Sigma_1} + (\mathbf{V}\mathbf{\Lambda^{+^\frac{1}{2}}})(\mathbf{V}\mathbf{\Lambda^{+^\frac{1}{2}}})^\top - (\mathbf{V}\mathbf{\Lambda^{-^\frac{1}{2}}})(\mathbf{V}\mathbf{\Lambda^{-^\frac{1}{2}}})^\top \bigg).
\end{align}

We apply the \Cref{lem:MDL} to the above determinant iteratively to rewrite it. First, we have:

\begin{align}\label{eq:plus_delta_plus}
    \det(\mathbf{\Sigma_1 + \Delta \mathbf{\Sigma^+}}) = \det (\mathbf{\Sigma_1} + (\mathbf{V}\mathbf{\Lambda^{+^\frac{1}{2}}})(\mathbf{V}\mathbf{\Lambda^{+^\frac{1}{2}}})^\top) \stackrel{\ref{lem:MDL}}{=} \det (\mathbf{\Sigma}_1)\cdot \det \bigg(\mathbb{I}_n + (\mathbf{V}\mathbf{\Lambda^{+^\frac{1}{2}}})^\top\mathbf{\Sigma}_1^{-1}(\mathbf{V}\mathbf{\Lambda^{+^\frac{1}{2}}})\bigg).
\end{align}

Second, we re-apply the lemma in the following:

\begin{align}\label{eq:plus_delta}
    &\det(\mathbf{\Sigma_1 + \Delta \mathbf{\Sigma}}) = \det \bigg( \mathbf{\Sigma_1} + (\mathbf{V}\mathbf{\Lambda^{+^\frac{1}{2}}})(\mathbf{V}\mathbf{\Lambda^{+^\frac{1}{2}}})^\top - (\mathbf{V}\mathbf{\Lambda^{-^\frac{1}{2}}})(\mathbf{V}\mathbf{\Lambda^{-^\frac{1}{2}}})^\top\bigg) \nonumber \\
    & \stackrel{\ref{lem:MDL}}{=} \det \bigg(\mathbf{\Sigma_1} + (\mathbf{V}\mathbf{\Lambda^{+^\frac{1}{2}}})(\mathbf{V}\mathbf{\Lambda^{+^\frac{1}{2}}})^\top\bigg) \cdot \det \bigg ( \mathbb{I}_n - (\mathbf{V}\mathbf{\Lambda^{-^\frac{1}{2}}})^\top\bigg(\mathbf{\Sigma_1} + (\mathbf{V}\mathbf{\Lambda^{+^\frac{1}{2}}})(\mathbf{V}\mathbf{\Lambda^{+^\frac{1}{2}}})^\top\bigg)^{-1}(\mathbf{V}\mathbf{\Lambda^{-^\frac{1}{2}}})\bigg)\nonumber \\
    &\stackrel{eq. ~\ref{eq:plus_delta_plus}}{=} \det (\mathbf{\Sigma}_1)\cdot \det \bigg(\mathbb{I}_n + \underbrace{(\mathbf{V}\mathbf{\Lambda^{+^\frac{1}{2}}})^\top\mathbf{\Sigma}_1^{-1}(\mathbf{V}\mathbf{\Lambda^{+^\frac{1}{2}}})}_{\mathcal{Q}} \bigg)\cdot \det \bigg ( \mathbb{I}_n - \underbrace{(\mathbf{V}\mathbf{\Lambda^{-^\frac{1}{2}}})^\top\bigg(\mathbf{\Sigma_1} + (\mathbf{V}\mathbf{\Lambda^{+^\frac{1}{2}}})(\mathbf{V}\mathbf{\Lambda^{+^\frac{1}{2}}})^\top\bigg)^{-1}(\mathbf{V}\mathbf{\Lambda^{-^\frac{1}{2}}})}_{\mathcal{H}}\bigg) \nonumber \\
    &= \det (\mathbf{\Sigma}_1)\cdot \det \big(\mathbb{I}_n + \mathcal{Q} \big)\cdot \det \big( \mathbb{I}_n - \mathcal{H}\big) = \det(\mathbf{\Sigma}_1) \cdot \prod_{i=1}^{\tau^+} (1+\lambda_i(\mathcal{Q}))\cdot \prod_{i=1}^{\tau^-} (1-\lambda_i(\mathcal{H})),
\end{align}

where we have defined the matrices $\mathcal{Q}$ and $\mathcal{H}$ in the second last line. Note that both $\mathbf{\Sigma}_1$ and $\mathbf{\Sigma}_1 + \Delta \mathbf{\Sigma}^+$ are positive-semi-definite and invertible matrices. Also, $\mathbf{V}\mathbf{\Lambda^{+^\frac{1}{2}}}$ and $\mathbf{V}\mathbf{\Lambda^{-^\frac{1}{2}}}$ have ranks $\tau^+$ and $\tau^-$, respectively. Therefore, $\mathcal{Q}$ and $\mathcal{H}$ are positive semi-definite and have the same ranks $\tau^+$ and $\tau^-$, respectively. Hence, the last equality holds.

In a similar way, we can re-write the other term appearing in \Cref{eq:alpharenyi}, as follows:

\begin{align}\label{eq:plus_alpha_delta}
    &\det(\mathbf{\Sigma_1 + \alpha \Delta \mathbf{\Sigma}}) = \det \bigg( \mathbf{\Sigma_1} + \alpha(\mathbf{V}\mathbf{\Lambda^{+^\frac{1}{2}}})(\mathbf{V}\mathbf{\Lambda^{+^\frac{1}{2}}})^\top - \alpha(\mathbf{V}\mathbf{\Lambda^{-^\frac{1}{2}}})(\mathbf{V}\mathbf{\Lambda^{-^\frac{1}{2}}})^\top\bigg) \nonumber \\
    & \stackrel{\ref{lem:MDL}}{=} \det \bigg(\mathbf{\Sigma_1} + \alpha(\mathbf{V}\mathbf{\Lambda^{+^\frac{1}{2}}})(\mathbf{V}\mathbf{\Lambda^{+^\frac{1}{2}}})^\top\bigg) \cdot \det \bigg ( \mathbb{I}_n - \alpha(\mathbf{V}\mathbf{\Lambda^{-^\frac{1}{2}}})^\top \bigg(\mathbf{\Sigma_1} + \alpha(\mathbf{V}\mathbf{\Lambda^{+^\frac{1}{2}}})(\mathbf{V}\mathbf{\Lambda^{+^\frac{1}{2}}})^\top\bigg)^{-1}(\mathbf{V}\mathbf{\Lambda^{-^\frac{1}{2}}})\bigg)\nonumber \\
    &= \det (\mathbf{\Sigma}_1)\cdot \det \bigg(\mathbb{I}_n + \alpha(\mathbf{V}\mathbf{\Lambda^{+^\frac{1}{2}}})^\top\mathbf{\Sigma}_1^{-1}(\mathbf{V}\mathbf{\Lambda^{+^\frac{1}{2}}}) \bigg)\cdot \det \bigg ( \mathbb{I}_n - \alpha\underbrace{(\mathbf{V}\mathbf{\Lambda^{-^\frac{1}{2}}})^\top\bigg(\mathbf{\Sigma_1} + \alpha(\mathbf{V}\mathbf{\Lambda^{+^\frac{1}{2}}})(\mathbf{V}\mathbf{\Lambda^{+^\frac{1}{2}}})^\top\bigg)^{-1}(\mathbf{V}\mathbf{\Lambda^{-^\frac{1}{2}}})}_{\mathcal{J}}\bigg)\nonumber \\
    &= \det (\mathbf{\Sigma}_1)\cdot \det \big(\mathbb{I}_n + \alpha \mathcal{Q} \big)\cdot \det \big( \mathbb{I}_n - \alpha \mathcal{J}\big) = \det(\mathbf{\Sigma}_1) \cdot \prod_{i=1}^{\tau^+} (1+\alpha \lambda_i(\mathcal{Q}))\cdot \prod_{i=1}^{\tau^-} (1-\alpha \lambda_i(\mathcal{J})),
\end{align}
where we have defined the matrix $\mathcal{J}$ in the second last line above. Following the same reasoning for $\mathcal{H}$, $\mathcal{J}$ has also rank $\tau^-$. Hence, the last equality holds. Also, as $(\mathbf{V}\mathbf{\Lambda^{+^\frac{1}{2}}})(\mathbf{V}\mathbf{\Lambda^{+^\frac{1}{2}}})^\top$ is a positive semi-definite matrix and $\alpha > 1$, we have:

\begin{align}\label{eq:egenvalue_comparison}
    0 < \lambda_i(\mathcal{H}) \leq \lambda_i(\mathcal{J})
\end{align}

We can now rewrite \Cref{eq:renyiddprime} as:

\begin{align}\label{eq:renyiddprime_simplified}
D_{\alpha}(\mathcal{M}(\mathbf{Q}(\mathcal{D}))||\mathcal{M}(\mathbf{Q}(\mathcal{D}'))) &= \frac{-r}{2(\alpha - 1)} \log \bigg(\frac{\det (\mathbf{\Sigma_1} + \alpha \Delta \mathbf{\Sigma})}{(\det (\mathbf{\Sigma_1}))^{1-\alpha} (\det (\mathbf{\Sigma_1 + \Delta \mathbf{\Sigma}}))^\alpha}\bigg) \nonumber \\
& = \frac{-r}{2(\alpha - 1)} \log \bigg( \frac{\prod_{i=1}^{\tau^+} (1+\alpha \lambda_i(\mathcal{Q}))\cdot \prod_{i=1}^{\tau^-} (1- \alpha \lambda_i(\mathcal{J}))}{\prod_{i=1}^{\tau^+} (1+ \lambda_i(\mathcal{Q}))^{\alpha}\cdot \prod_{i=1}^{\tau^-} (1-\lambda_i(\mathcal{H}))^{\alpha}}\bigg)\nonumber \\
& = \frac{r}{2(\alpha - 1)} \bigg ( \sum_{i=1}^{\tau^+} \log \frac{ (1+ \lambda_i(\mathcal{Q}))^{\alpha}}{(1+\alpha \lambda_i(\mathcal{Q}))} + \sum_{i=1}^{\tau^-} \log \frac{(1- \lambda_i(\mathcal{H}))^{\alpha}}{(1- \alpha \lambda_i(\mathcal{J}))}\bigg) \nonumber \\
& \leq \frac{r}{2(\alpha - 1)} \bigg ( \sum_{i=1}^{\tau^+} \log \frac{ (1- \lambda_i(\mathcal{Q}))^{\alpha}}{(1-\alpha \lambda_i(\mathcal{Q}))} + \sum_{i=1}^{\tau^-} \log \frac{(1- \lambda_i(\mathcal{H}))^{\alpha}}{(1- \alpha \lambda_i(\mathcal{J}))}\bigg),   
\end{align}

We now bound the eigen values $\lambda_i(\mathcal{Q})$, $\lambda_i(\mathcal{J})$ and $\lambda_i(\mathcal{H})$, \emph{which are all nonnegative}. We have:

\begin{align}\label{eq:eigenvaluesofH1}
    \lambda_i(\mathcal{H}) &= \lambda_i\bigg((\mathbf{V}\mathbf{\Lambda^{-^\frac{1}{2}}})^\top\bigg(\mathbf{\Sigma_1} + (\mathbf{V}\mathbf{\Lambda^{+^\frac{1}{2}}})(\mathbf{V}\mathbf{\Lambda^{+^\frac{1}{2}}})^\top\bigg)^{-1}(\mathbf{V}\mathbf{\Lambda^{-^\frac{1}{2}}})\bigg) \stackrel{\ref{lem:evofXAinvY}}{\leq}\frac{\|(\mathbf{V}\mathbf{\Lambda^{-^\frac{1}{2}}})(\mathbf{V}\mathbf{\Lambda^{-^\frac{1}{2}}})^\top\|_F}{\lambda_{min}\big(\mathbf{\Sigma_1} + (\mathbf{V}\mathbf{\Lambda^{+^\frac{1}{2}}})(\mathbf{V}\mathbf{\Lambda^{+^\frac{1}{2}}})^\top\big)} \nonumber \\
    & = \frac{\|\Delta \mathbf{\Sigma}^-\|_F}{\lambda_{min}\big(\mathbf{\Sigma_1} + \Delta \mathbf{\Sigma}^+\big)} \stackrel{\ref{lem:weylineq}}{\leq} \frac{\|\Delta \mathbf{\Sigma}^-\|_F}{\lambda_{min}\big(\mathbf{\Sigma_1}\big)} \stackrel{\ref{eq:norm_ineq}}{\leq} \frac{\|\Delta \mathbf{\Sigma}\|_F}{\lambda_{min}\big(\mathbf{\Sigma_1}\big)} = \frac{\|\sigma_A^2 \mathbf{Q}(\mathcal{D})\mathbf{Q}(\mathcal{D})^\top - \sigma_A^2 \mathbf{Q}(\mathcal{D}')\mathbf{Q}(\mathcal{D}')^\top\|_F}{\lambda_{min}\big(\mathbf{\Sigma_1}\big)} \nonumber \\
    & \leq \frac{\sigma_A^2 B_C}{\lambda_{min}\big(\mathbf{\Sigma_1}\big)}  \stackrel{\ref{lem:DPofRS}}{\leq} \frac{\sigma_A^2B_C}{\sigma_A^2\bar{\lambda} + \sigma_g^2} = \frac{B_C}{\bar{\lambda} + \sigma_g^2/\sigma_A^2},
\end{align}


where in the last line, we assumed that $\|\mathbf{Q}(\mathcal{D})\mathbf{Q}(\mathcal{D})^\top -  \mathbf{Q}(\mathcal{D}')\mathbf{Q}(\mathcal{D}')^\top\|_F \leq B_C$ and used the minimum eigen value assumption of \Cref{lem:DPofRS}.
Following the assumptions in \Cref{lem:DPofRS} and according to \Cref{lem:AABBbound}, we can set $B_C = 2B_F B_S$ (we can use tighter bounds for $B_C$ when stronger assumptions exist about the difference between $\mathbf{Q}(\mathcal{D})$ and $\mathbf{Q}(\mathcal{D}')$. For example,  when as in \citep{lev2025gaussianmixingmechanismrenyi}, their difference is equal to one column of $\mathbf{Q}(\mathcal{D})$ (or one column of $\mathbf{Q}(\mathcal{D}')$) with bounded column norm $B_S$, we can set $B_C = B_S^2$). The bound $B_C = 2B_F B_S$ yields to:

\begin{align}\label{eq:eigenvaluesofH}
    \lambda_i(\mathcal{H}) \leq \frac{B_C}{\bar{\lambda} + \sigma_g^2/\sigma_A^2} = \frac{2 B_F B_S}{\bar{\lambda} + \sigma_g^2/\sigma_A^2} = \Gamma^{-1} \stackrel{\ref{lem:DPofRS}}{<} 1.
\end{align}

In a similar way, we can show that the following bounds hold for the eigen values of $\mathcal{J}$ and $\mathcal{Q}$ and get to:

\begin{align}\label{eq:eigenvaluesofJandQ}
    0 < \lambda_i(\mathcal{H}) \leq \Gamma^{-1}<1,\nonumber \\
    0<\lambda_i(\mathcal{J}) \leq \Gamma^{-1}<1,\nonumber \\
    0<\lambda_i(\mathcal{Q}) \leq \Gamma^{-1}<1.
\end{align}

Using the above result, we can also extend \cref{eq:egenvalue_comparison} as well to get to the following:

\begin{align}\label{eq:egenvalue_comparison_extended}
    0 < \lambda_i(\mathcal{H}) \leq \lambda_i(\mathcal{J}) \leq \Gamma^{-1}<1.
\end{align}

We can now use the above bounds to rewrite the divergence derived in \cref{eq:renyiddprime_simplified} as follows:

\begin{align}\label{eq:renyiddprime_simplified_rewritten}
D_{\alpha}(\mathcal{M}(\mathbf{Q}(\mathcal{D}))||\mathcal{M}(\mathbf{Q}(\mathcal{D}'))) & = \frac{r}{2(\alpha - 1)} \bigg ( \sum_{i=1}^{\tau^+} \log \frac{ (1+ \lambda_i(\mathcal{Q}))^{\alpha}}{(1+\alpha \lambda_i(\mathcal{Q}))} + \sum_{i=1}^{\tau^-} \log \frac{(1- \lambda_i(\mathcal{H}))^{\alpha}}{(1- \alpha \lambda_i(\mathcal{J}))}\bigg) \nonumber \\
& \stackrel{\ref{eq:egenvalue_comparison_extended}}{\leq} \frac{r}{2(\alpha - 1)} \bigg ( \sum_{i=1}^{\tau^+} \log \frac{ (1+ \lambda_i(\mathcal{Q}))^{\alpha}}{(1+\alpha \lambda_i(\mathcal{Q}))} + \sum_{i=1}^{\tau^-} \log \frac{(1- \lambda_i(\mathcal{J}))^{\alpha}}{(1- \alpha \lambda_i(\mathcal{J}))}\bigg) \nonumber \\
& \leq \frac{r}{2(\alpha - 1)} \bigg ( \sum_{i=1}^{\tau^+} \log \frac{ (1- \lambda_i(\mathcal{Q}))^{\alpha}}{(1-\alpha \lambda_i(\mathcal{Q}))} + \sum_{i=1}^{\tau^-} \log \frac{(1- \lambda_i(\mathcal{J}))^{\alpha}}{(1- \alpha \lambda_i(\mathcal{J}))}\bigg)
\nonumber \\
& \leq \frac{r \tau}{2(\alpha -1 )}\bigg[\alpha \log \big(1-\Gamma^{-1}\big) - \log \big(1-\alpha \Gamma^{-1}\big) \bigg]
\end{align}

where the second last inequality comes from the fact that function $f(t; \alpha) = \log \frac{(1-t)^\alpha}{1-\alpha t} - \log \frac{(1+t)^\alpha}{1+\alpha t}$ is always non-negative for $\alpha > 1$ and $\alpha t <1$ (in \Cref{eq:renyiddprime_simplified_rewritten}, $\alpha > 1$ and $\alpha \lambda_i(\mathcal{Q})< \alpha \Gamma^{-1}<1$). Also, in the last inequality, we have used the fact that function $f(t; \alpha) = \log \frac{(1-t)^\alpha}{1-\alpha t}$ is non-decreasing for $\alpha > 1$ and $\alpha t <1$. So we replaced the eigen values $\lambda_i$ with their upper bound $\Gamma^{-1}<1$.

\end{proof}

\subsection{Proof of \Cref{lem:DPofLora}}

\DPofLoRA*

\begin{proof}
    The proof is a direct result of the proof of \Cref{lem:DPofRS} above. First, we  note that the norm bound assumption for sample gradients $\|\nabla_{\mathbf{W}_\ell^t}l(x_i; \mathbf{W}_\ell^t)\|_F \leq C$ directly results in a similar norm bound for batch gradients:

    \begin{align}\label{eq:batchgradient_normbound}
        &\|\nabla_{\mathbf{W}_{\ell}^t} \mathcal{L}^t \|_F = \big\|\frac{1}{b}\sum_{i \in \mathcal{B}^t}\nabla_{\mathbf{W}_\ell^t}l(x_i; \mathbf{W}_\ell^t)\big\|_F \leq \frac{1}{b} \sum_{i \in \mathcal{B}^t}\big\|\nabla_{\mathbf{W}_\ell^t}l(x_i; \mathbf{W}_\ell^t)\big\|_F \leq C,
    \end{align}

    where data batches $\mathcal{B}^t$ are sampled from $\mathcal{D}$. Therefore, for $\mathcal{G}_\ell(\mathcal{D})= \sum_{t=0}^{T-1}\nabla_{\mathbf{W}_{\ell}^t} \mathcal{L}^t$ defined in the lemma, we have:

    \begin{align}\label{eq:H_normbound}
        &\|\mathcal{G}_\ell(\mathcal{D})\|_F = \big\|\sum_{t=0}^{T-1}\nabla_{\mathbf{W}_{\ell}^t} \mathcal{L}^t\big\|_F \leq TC.
    \end{align}

    We now find a sensitivity upper bound for $\max_{\mathcal{D}, \mathcal{D}'} \|\mathcal{G}_\ell(\mathcal{D}) - \mathcal{G}(\mathcal{D}')\|_F$, where $\mathcal{D}$ and $\mathcal{D'}$ are two neighboring dataset pairs by replacement, in the following. 
    
    \paragraph{\textbf{Sensitivity of $\mathcal{G}_\ell(\mathcal{D}) = \sum_{t=0}^{T-1}\nabla_{\mathbf{W}_{\ell}^t} \mathcal{L}^t$:}} Let us consider the adaptive composition $g(\mathcal{D})=f_2(\mathcal{D}, f_1(\mathcal{D}))$, for some functions $f_1, f_2, g$ and dataset $\mathcal{D}$. Let us assume that, at a fixed $f_1(\mathcal{D})$, function $f_2$ has sensitivity $S$ to one sample replacement in $\mathcal{D}$ and is $L$-Lipschitz continuous w.r.t. its second input (i.e., w.r.t. the output of $f_1(\mathcal{D})$). Then, the sensitivity of $g(\mathcal{D})$ to a sample replacement in $\mathcal{D}$ follows $\Delta g \leq S + L \Delta f_1$. We show this in the following:
    
    \begin{align}\label{lem:adaptivecomposition_sensitivity}
        \Delta g = \max_{\mathcal{D}, \mathcal{D}'}\|g(\mathcal{D}) - g(\mathcal{D}')\| &= \max_{\mathcal{D},\mathcal{D}'} \|f_2(\mathcal{D}, f_1(\mathcal{D}))-f_2(\mathcal{D}', f_1(\mathcal{D}'))\| \nonumber \\
        & = \max_{\mathcal{D},\mathcal{D}'} \big\|f_2(\mathcal{D}, f_1(\mathcal{D})) - f_2(\mathcal{D}', f_1(\mathcal{D}))- \big(f_2(\mathcal{D}', f_1(\mathcal{D}'))- f_2(\mathcal{D}', f_1(\mathcal{D}))\big)\big\| \nonumber \\
        & \leq \max_{\mathcal{D},\mathcal{D}'} \bigg(\big\|f_2(\mathcal{D}, f_1(\mathcal{D})) - f_2(\mathcal{D}', f_1(\mathcal{D}))\big\| + \big\| \big(f_2(\mathcal{D}', f_1(\mathcal{D}'))- f_2(\mathcal{D}', f_1(\mathcal{D}))\big)\big\| \bigg) \nonumber \\
        & \leq S + \max_{\mathcal{D},\mathcal{D}'} \big\| \big(f_2(\mathcal{D}', f_1(\mathcal{D}'))- f_2(\mathcal{D}', f_1(\mathcal{D}))\big)\big\| \nonumber \\
        & \leq S + L \max_{\mathcal{D},\mathcal{D}'} \big\|f_1(\mathcal{D}') -f_1(\mathcal{D})\big\| =  S + L \Delta f_1.
    \end{align}

    Having the above result, let us define $f_\ell ^t (\mathcal{D}, \mathbf{W}_{\ell}^t) = \mathbf{W}_{\ell}^t - \eta \nabla_{\mathbf{W}_{\ell}^t}\mathcal{L}^t  = \mathbf{W}_{\ell}^t - \frac{\eta}{b} \sum_{i \in \mathcal{B}^t}\nabla_{\mathbf{W}_{\ell}^t}l(x_i; \mathbf{W_\ell^t})$, i.e., an \SGD update at $\mathbf{W}_{\ell}^t$ with batch size $b$ on dataset $\mathcal{D}$. Therefore, we have an adaptive composition of the form $f_\ell ^t (\mathcal{D}, \mathbf{W}_{\ell}^t) = f_\ell ^t (\mathcal{D}, f_\ell ^{t-1} (\mathcal{D}, \mathbf{W}_{\ell}^{t-1}))$ that we saw above.  First, from the assumptions in \Cref{lem:DPofLora}, we have $\|\nabla_{\mathbf{W}_{\ell}^t}l(x_i; \mathbf{W}_{\ell}^t)\|\leq C$. Therefore, at a fixed $\mathbf{W}_\ell^t$, $f_\ell^t$ has sensitivity $\frac{2\eta C}{b}$ (by replacement). Second, remember from \Cref{lem:DPofLora} that loss function $l$ is $\beta$-smooth (or equivalently $\nabla_{\mathbf{W}_{\ell}^t} l$ is $\beta$-lipschitz). Therefore, $f_\ell^t(\mathcal{D}, \mathbf{W}_{\ell}^t)$ is $(1+\eta \beta)$-Lipschitz w.r.t. $\mathbf{W}_{\ell}^t = f_\ell ^{t-1} (\mathcal{D}, \mathbf{W}_{\ell}^{t-1})$. Therefore, using \Cref{lem:adaptivecomposition_sensitivity}, we can conclude that $\Delta f_\ell ^t \leq \frac{2 \eta C}{b} + (1+\eta \beta) \Delta f_\ell ^{t-1}$.  Furthermore, the sensitivity of function $f_\ell ^0 (\mathcal{D}, \mathbf{W}_{\ell}^0) = \mathbf{W}_{\ell}^0 - \frac{\eta}{b} \sum_{i \in \mathcal{B}^0}\nabla_{\mathbf{W}_{\ell}^0}l(x_i; \mathbf{W}_\ell^0)$ is equal to $\Delta f_\ell^0 = \frac{2 \eta C}{b}$ (remember $\mathbf{W}_\ell^0$ is fixed). Hence, for $f_\ell ^{T-1} (\mathcal{D}, \mathbf{W}_{\ell}^{T-1}) = \mathbf{W}_\ell^0-\eta \sum_{t=0}^{T-1}\nabla_{\mathbf{W}_{\ell}^t} \mathcal{L}^t$, we have:

    \begin{align}
        \Delta f_\ell ^{T-1} \leq \frac{2\eta C}{b} + \frac{2\eta C}{b} (1+\eta \beta) + \cdots + \frac{2\eta C}{b} (1+\eta \beta)^{T-1} = \frac{2 \eta C}{b} \frac{(1+\eta \beta)^{T}-1}{\eta \beta}.
    \end{align}

    As $f_\ell ^{T-1} (\mathcal{D}, \mathbf{W}_{\ell}^{T-1}) =  \mathbf{W}_\ell^0 - \eta \mathcal{G}_\ell(\mathcal{D})$, we have the following sensitivity for $\mathcal{G}_\ell$:

    \begin{align}\label{eq:H_sensitivitybound}
        \Delta \mathcal{G}_\ell = \frac{\Delta f_\ell ^{T-1}}{\eta}\leq \frac{2 C}{b} \frac{(1+\eta \beta)^{T}-1}{\eta \beta}.
    \end{align}

    Having found the above sensitivity for $\mathcal{G}_\ell$, we first analyze the privacy of \LoRA in each layer $\ell$ and then we compute the overall privacy for the fine-tuning procedure. Assuming $T$ gradient updates in the $\ell$-th layer (usually $T=EN/b$, where $E$ is the number of fine-tuning epochs, $N$ is the dataset size, and $b$ is the batch size), let us define the following matrices to be released:
    \begin{align}
        \mathcal{M}_{\ell}(\mathcal{D}) := \sum_{t=0}^{T-1}\nabla_{\mathbf{W}_{\ell}^t} \mathcal{L}^t (\mathcal{D})\mathbf{A}_{\ell}^0{^\top} + \mathbf{Z}_\ell = \mathcal{G}_\ell (\mathcal{D})+ \mathbf{Z}_\ell \in \mathbb{R}^{n \times r}, ~~~\ell\in\{1, 2, \ldots, L\}
    \end{align}

    where $\nabla_{\mathbf{W}_{\ell}^t} \mathcal{L}^t (\mathcal{D})$ means that the batch gradient $\nabla_{\mathbf{W}_{\ell}^t} \mathcal{L}^t$ was computed over dataset $\mathcal{D}$. Therefore, we can prove the privacy of \LoRA in the $\ell$-th layer by bounding $\mathcal{D}_\alpha \big(\mathcal{M}_{\ell}(\mathcal{D}) || \mathcal{M}_{\ell}(\mathcal{D}')\big)$, i.e., the R\'enyi divergence between $\mathcal{M}_{\ell}(\mathcal{D})$ and $\mathcal{M}_{\ell}(\mathcal{D}')$. To this end, we can now use the result of $\Cref{lem:DPofRS}$ directly to bound $\mathcal{D}_\alpha \big(\mathcal{M}_{\ell}(\mathcal{D}) || \mathcal{M}_{\ell}(\mathcal{D}')\big)$: we just need to set $B_F = TC$, and $B_S=\frac{2C}{b} \frac{(1+\eta \beta)^{T}-1}{\eta \beta}$ (based on \Cref{eq:H_normbound} and \Cref{eq:H_sensitivitybound}) into the \Cref{lem:DPofRS}, which results in:

    \begin{align}\label{eq:resultofdpofrs}
        \mathcal{D}_\alpha \big(\mathcal{M}_{\ell}(\mathcal{D}) || \mathcal{M}_{\ell}(\mathcal{D}')\big) \leq \frac{r \tau}{2(\alpha -1 )}\bigg[\alpha \log \big(1-\Gamma^{-1}\big) - \log \big(1-\alpha \Gamma^{-1}\big) \bigg],
    \end{align}
    with $\Gamma = \frac{\eta \beta}{4}\cdot \frac{b( \bar{\lambda} + r\sigma_g^2)}{ C^2 T ((1+ \eta \beta)^{T}-1)} > 1$. The above inequality provides the privacy guarantee for releasing $\mathcal{M}_\ell (\mathcal{D})$ for each of the $L$ layers. Simultaneous release of all the $L$ layers together, i.e., release of $\mathcal{M}(\mathcal{D}):=\{\mathcal{M}_{1}(\mathcal{D}), \ldots, \mathcal{M}_{L}(\mathcal{D})\big\}$, is still private as:

 \begin{align}\label{eq:overallrenyibound}
        \mathcal{D}_\alpha \big(\mathcal{M}(\mathcal{D}) || \mathcal{M}(\mathcal{D}')\big) = \sum_{\ell=1}^{L} \mathcal{D}_\alpha \big(\mathcal{M}_{\ell}(\mathcal{D}) || \mathcal{M}_{\ell}(\mathcal{D}')\big) \stackrel{\ref{eq:resultofdpofrs}}{\leq} \frac{r L\tau}{2(\alpha -1 )}\bigg[\alpha \log \big(1-\Gamma^{-1}\big) - \log \big(1-\alpha \Gamma^{-1}\big) \bigg],
    \end{align}

where the first equality comes from additivity of R\'enyi divergence for independent mechanisms. Finally, note that $\tilde{\mathbf{W}}_{\ell}^T (\mathcal{D}) = \mathbf{W}_{\ell}^0 - \eta \bigg( \sum_{t=0}^{T-1}\nabla_{\mathbf{W}_{\ell}^t} \mathcal{L}^t (\mathcal{D}) \mathbf{A}_{\ell}^0{^\top} + \mathbf{Z}_\ell \bigg) \mathbf{A}_{\ell}^0 = \mathbf{W}_{\ell}^0 - \eta \mathcal{M}_\ell (\mathcal{D}) \mathbf{A}_{\ell}^0 = \mathbf{W}_{\ell}^0 + \tilde{\mathbf{B}}_\ell^T \mathbf{A}_{\ell}^0$, which is a post-processing of $\mathcal{M}_\ell (\mathcal{D})$. Therefore, the privacy guarantee of releasing only $\{\tilde{\mathbf{W}}_{\ell}^T\}_{l=1}^L$ (without releasing the sketching matrices $\{\mathbf{A}_\ell^0\}_{\ell=1}^L$) can be obtained from the same bound \ref{eq:overallrenyibound}. This completes the proof of \Cref{lem:DPofLora}.
\end{proof}

\subsection{Proof of \Cref{cor:tCDPofLora}}
\tCDPofLoRA*

\begin{proof}
    The following proof is adopted from \citep{lev2025gaussianmixingmechanismrenyi} with minor changes. We know that $\Gamma>1$ and $\alpha \in (1, \Gamma)$. Let us define $k:=rLn$ and the difference function $d(k, \alpha, \Gamma)$ as:
    \begin{align}
        d(k, \alpha. \Gamma) := \frac{k\alpha}{2\Gamma^2} - \frac{k\alpha}{2(\alpha - 1)}\log(1-\Gamma^{-1}) + \frac{k}{2(\alpha - 1)}\log(1-\alpha\Gamma^{-1}).
    \end{align}
    Our goal is to find a subrange $(1,\zeta)$ of $(1,\Gamma)$ such that for every $\alpha \in (1,\zeta)$, we have $d(k, \alpha, \Gamma)\geq 0$. We next multiply function $d(k, \alpha, \Gamma)$ by $2 \Gamma^2 (\alpha - 1)$ and drop the constant $k=rLn>0$ to get to the following equivalent condition:

    \begin{align}
        e(\alpha) := \alpha(\alpha-1) - \alpha \Gamma^2\log(1-\Gamma^{-1}) + \Gamma^2 \log(1-\alpha\Gamma^{-1}) \geq 0, ~~~~\alpha \in (1, \zeta),
    \end{align}

where $\zeta$ is to be found. Note that at $\alpha=1$, $e(1)=0$. Also, we have:

 \begin{align}
        e'(\alpha) &= 2(\alpha - 1) - \Gamma^2 \log(1-\Gamma^{-1}) + \Gamma^2 \frac{-\Gamma^{-1}}{1-\alpha \Gamma^{-1}} \nonumber \\
        & = 2(\alpha - 1) - \Gamma^2 \log(1-\Gamma^{-1}) - \frac{\Gamma^2}{\Gamma-\alpha}.
    \end{align}

    By multiplying the equation above by $(\Gamma - \alpha) >0$, we get to the following quadratic function which has the same sign as $e'(\alpha)$. 

     \begin{align}
        H(\alpha):= (\Gamma - \alpha)e'(\alpha) &= 2(\alpha - 1)(\Gamma - \alpha) - \Gamma^2(\Gamma - \alpha) \log(1-\Gamma^{-1}) - \Gamma^2 \nonumber \\
        & = -2\alpha^2 + \bigg(1+2\Gamma + \Gamma^2 \log(1-\Gamma^{-1})\bigg)\alpha - \bigg(1+\Gamma + \Gamma^2 \log(1-\Gamma^{-1})\bigg)\Gamma.
    \end{align}

    We have the following discriminant for the function $H(\alpha)$ above:

    \begin{align}
        \Delta_H = \bigg(1+2\Gamma + \Gamma^2 \log(1-\Gamma^{-1})\bigg)^2 - 8\Gamma \bigg(1+\Gamma + \Gamma^2 \log(1-\Gamma^{-1})\bigg).
    \end{align}

    Therefore $H(\alpha)$ has two real roots:

    \begin{align}
        \alpha_{min/max} = \frac{\bigg(1+2\Gamma + \Gamma^2 \log(1-\Gamma^{-1})\bigg) \pm \sqrt{\Delta_H}}{4}.
    \end{align}

    Since the quadratic term of $H(\alpha)$ has the negative coefficient ``$-2$", $H(\alpha)$ is positive between $\alpha_{min}$ and  $\alpha_{max}$. For all $\Gamma>1$, we have $\alpha_{min}<1$. Furthermore, for $\Gamma>5/2$, we have $\alpha_{max} \in (1, \Gamma)$ and $\alpha_{max}>2\Gamma/5$. So if $\Gamma>5/2$, we can set $\zeta = \alpha_{max}$ and: for every $\alpha\in (1, \zeta)$, the functions $H(\alpha)$, $e(\alpha)$ and consequently $d(k, \alpha, \Gamma)$ will be positive. The proof is complete.

\end{proof}

\subsection{Proof of \Cref{lem:lora_final_forwardpass}}
\lorafinalforwardpass*

\begin{proof}
    
Our proof is directly based on the derivations in \citep{flora}. We first restate the following theorem from the work without restating its proof:

\begin{theorem}[\citet{flora}]
\label{thm:lora_dynamics}
Let \LoRA update matrices $\mathbf{A}_{\ell}$ and $B$ with \SGD as:

\begin{align}
    \mathbf{A}_{\ell}^{t+1} \gets \mathbf{A}_{\ell}^t - \eta \frac{\partial \mathcal{L}}{\partial \mathbf{A}_{\ell}^t} = \mathbf{A}_{\ell}^t - \eta \mathbf{B}_{\ell}^t{^\top} (\nabla_{\mathbf{W}_{\ell}^t} \mathcal{L}^t),
\end{align}

\begin{align}
    \mathbf{B}_{\ell}^{t+1} \gets \mathbf{B}_{\ell}^t - \eta \frac{\partial \mathcal{L}}{\partial \mathbf{B}_{\ell}^t} = \mathbf{B}_{\ell}^t - \eta (\nabla_{\mathbf{W}_{\ell}^t} \mathcal{L}^t)\mathbf{A}_{\ell}^t{^\top},
\end{align}

where $\eta$ is the learning rate. We assume $\|\sum_{t=0}^{\tau} \nabla_{\mathbf{W}_{\ell}^t}\mathcal{L}^t\|_F \leq L$ for every $\tau\leq T$ during training, implying that the model stays within a finite Euclidean ball from $\mathbf{W}_{\ell}^0$. In this case, the dynamics of $\mathbf{A}_{\ell}^t$ and $\mathbf{B}_{\ell}^t$ are given by
\begin{align}\label{eq:AB_dynamics}
    \mathbf{A}_{\ell}^t = \mathbf{A}_{\ell}^0 + \eta \mathbf{A}_{\ell}^0f_{\mathbf{A}_{\ell}}(t), ~~~ \mathbf{B}_{\ell}^t = \eta f_{\mathbf{B}_{\ell}}(t)\mathbf{A}_{\ell}^0{^\top},
\end{align}

where the forms of $f_{\mathbf{A}_{\ell}}(t)\in \mathbb{R}^{m \times m}$ and $f_{\mathbf{B}_{\ell}}(t) \in \mathbb{R}^{n \times m}$ are as follows for $t\geq 1$:

\begin{align}
    f_{\mathbf{A}_{\ell}}(t)&= -\eta \sum_{i=0}^{t-1} f_{\mathbf{B}_{\ell}}^{\top}(i)(\nabla_{W^i} \mathcal{L}^i), \\
    f_{\mathbf{B}_{\ell}}(t)&= -\sum_{i=0}^{t-1} (\nabla_{W^i} \mathcal{L}^i)(\eta f_{\mathbf{A}_{\ell}}^{\top}(i)+I),
\end{align}
and $f_{\mathbf{A}_{\ell}}(0) = f_{\mathbf{B}_{\ell}}(0) = 0$. In particular, $\|f_{\mathbf{A}_{\ell}}(t)\|_2 \leq \frac{\eta L^2 \big(1-(\eta^2 L^2)^t\big)}{1-\eta^2L^2}$ for every $t$.
\end{theorem}

Let us denote the forward-pass parameter after $T$ gradient steps on $\mathbf{A}_{\ell}^0$ and $\mathbf{B}_{\ell}^0$ as $\mathbf{W}_{\ell}^T = \mathbf{W}_{\ell}^0 + \mathbf{B}_{\ell}^T\mathbf{A}_{\ell}^T$. Therefore, we have:

\begin{align}\label{eq:forward_update}
    \mathbf{W}_{\ell}^T = \mathbf{W}_{\ell}^0 + \mathbf{B}_{\ell}^T\mathbf{A}_{\ell}^T &= \mathbf{W}_{\ell}^0 + (\eta f_B(T)\mathbf{A}_{\ell}^0{^\top})(\mathbf{A}_{\ell}^0 + \eta \mathbf{A}_{\ell}^0 f_A(T)) \nonumber \\
    & = \mathbf{W}_{\ell}^0 + \eta f_B(T)\mathbf{A}_{\ell}^0{^\top}\mathbf{A}_{\ell}^0 + \eta^2 f_B(T) \mathbf{A}_{\ell}^0{^\top} \mathbf{A}_{\ell}^0 f_A(T).
\end{align}

After substituting $f_B(T)$ into the second term in the above equation and noting that $f_A(T) \in \mathcal{O}(\eta)$, we get to:

\begin{align}
    \mathbf{W}_{\ell}^T = \mathbf{W}_{\ell}^0 + \mathbf{B}_{\ell}^T \mathbf{A}_{\ell}^T = \mathbf{W}_{\ell}^0 - \eta \bigg(\sum_{t=0}^{T-1} \nabla_{\mathbf{W}_{\ell}^t} \mathcal{L}^t \mathbf{A}_{\ell}^0{^\top} \mathbf{A}_{\ell}^0 \bigg ) + \mathcal{O}(\eta^3 ),
\end{align}

which completes the proof.

\end{proof}

\subsection{Proof of \Cref{lem:noise_var_kaiming}}
\noisevarkaiming*

\begin{proof}
The elements of $\mathbf{A}_{\ell}$ are from $\mathcal{U}(-b, b)$, which has variance $\sigma_{\mathbf{A}}^2=b^2/3$. Let $a_{i,j}$ denote the element in $i$-th row and $j$-th column of $\mathbf{A}_{\ell}$. Therefore, for all $i$ and $j$, $a_{i,j}$ has distribution $\mathcal{U}(-b, b)$. Let $B= \mathbf{A}_{\ell}^\top \mathbf{A}_{\ell}-\mathbb{I}_m$. Also, let $A_{i,:}$ and $A_{:,j}$ denote the $i$-th row and $j$-th column of $\mathbf{A}_{\ell}$, respectively. We have:

\begin{align}
    B_{i,i} = [\mathbf{A}_{\ell}^{\top} A]_{i,i}-1 = A_{:,i}^{\top} A_{:,i}-1 =  \|A_{:,i}\|_2^2 -1 = \big(\sum_{l=1}^r a_{l,i}^2 \big) -1
\end{align}

We know that $a_{l,i}$ is from $\mathcal{U}(-b,b)$ distribution. Hence, $a_{l,i}^2/b^2$ follows \texttt{Beta}(1/2, 1) distribution. Equivalently, $\mathbb{E}[a_{l,i}^2]= \texttt{Var}[a_{l,i}] = \sigma_{\mathbf{A}}^2 = b^2/3$ and $\texttt{Var}[a_{l,i}^2]= 4b^4/45$. Therefore, for $B_{i,i} = \sum_{l=1}^r a_{l,i}^2 -1$ ($i \in \{1,\ldots,m\}$),  which is the sum of $r$ independent such variables, we have: 

\begin{align} \label{eq:diagonal_mean_var_uniform}
    &\mathbb{E}[B_{i,i}] = \mathbb{E}\big[ \sum_{l=1}^r a_{l,i}^2\big] -1  = \frac{rb^2}{3} -1, \nonumber
    \\
    &\texttt{Var}[B_{i,i}] = \texttt{Var}[\sum_{l=1}^r a_{l,i}^2] = \frac{4rb^4}{45}.
\end{align}

Similarly, we find the mean and variance of the non-diagonal elements $B_{i,j} (i\neq j)$ of $B$. We have:

\begin{align}
    B_{i,j} = [\mathbf{A}_{\ell}^{\top} A]_{i,j} =  A_{:,i}^{\top} A_{:,j} = \sum_{l=1}^r a_{l,i} a_{l,j},
\end{align}

where $a_{l,i}$ and $a_{l,j}$ are independent. Therefore, we can derive that :

\begin{align}
    \mathbb{E}[a_{l,i} a_{l,j}] = 0 \\
    \texttt{Var}[a_{l,i} a_{l,j}] = \frac{5b^4}{45}.
\end{align}

Consequently, we can compute the mean and variance of the non-diagonal elements of $B$ ($i \neq j$):

\begin{align}\label{eq:nondiagonal_mean_var_uniform}
    &\mathbb{E}[B_{i,j}] = \mathbb{E}\big[ \sum_{l=1}^r a_{l,i}a_{l,j}\big] = 0, \nonumber 
    \\
    &\texttt{Var}[B_{i,j}] = \texttt{Var}[\sum_{l=1}^r a_{l,i}a_{l,j}] = \sum_{l=1}^r \texttt{Var}[a_{l,i}a_{l,j}] = \frac{5rb^4}{45}.
\end{align}

In the last line, we could take the \texttt{Var} into the summation because the $r$ summands are independent.

So far, we have computed the mean and variance of each entry in $B= \mathbf{A}_{\ell}^\top \mathbf{A}_{\ell}-\mathbb{I}_m \in \mathbb{R}^{m \times m}$. Now, for a given $\mathbf{q}\in \mathbb{R}^{1\times m}$, we have:

\begin{align} \label{eq:qB_uniform}
    \mathbf{q}\cdot B  = \sum_{l=1}^m \mathbf{q}_l B_{l,:},
\end{align}
where $B_{l,:}$ is row $l$ of $B$. Let $u_i$ denote the $i$-th element of $\mathbf{q}\cdot B$. Hence, for each element $u_i$ ($i \in \{1,\ldots, m\}$), we have:

\begin{align}\label{eq:mean_var_uniform}
\mathbb{E}[u_i] &= \mathbb{E}\bigg[ \sum_{l=1}^m \mathbf{q}_l B_{l,i}\bigg] =\mathbf{q}_i \mathbb{E}[B_{i,i}] + \sum_{l\neq i} \mathbf{q}_l \mathbb{E}[B_{l,i}] = (rb^2/3 - 1)\mathbf{q}_i= (r\sigma_{\mathbf{A}}^2 - 1)\mathbf{q}_i, \nonumber\\
\texttt{Var}[u_i] &= \texttt{Var}\bigg[ \sum_{l=1}^m \mathbf{q}_l B_{l,i}\bigg] =\texttt{Var}\bigg[ \sum_{l=1}^m \mathbf{q}_l  \sum_{t=1}^r a_{t,l} a_{t,i}\bigg] = \texttt{Var}\bigg[\mathbf{q}_i a_{t,i}^2 + \sum_{l\neq i, l=1}^m \mathbf{q}_l  \sum_{t=1}^r a_{t,l} a_{t,i}\bigg] \nonumber \\
& = \mathbf{q}_i^2 \texttt{Var}[a_{t,i}^2] + \sum_{l\neq i, l=1}^m \mathbf{q}_l^2  \sum_{t=1}^r \texttt{Var}[a_{t,l} a_{t,i}]  \nonumber \\
& = \frac{4rb^4}{45} \mathbf{q}_i^2 + \sum_{l\neq i} \frac{5rb^4}{45}\mathbf{q}_l^2  = \frac{5rb^4}{45} \|\mathbf{q}\|_2^2 - \frac{rb^4}{45}\mathbf{q}_i^2 = r\sigma_{\mathbf{A}}^4 (\|\mathbf{q}\|_2^2 - \frac{\mathbf{q}_i^2}{5}).
\end{align}

In the second last line we could take the \texttt{Var} into the summations because every pair of the $mr$ summands are uncorrelated (so variance of their sum is sum of their variances). 

Finally, according to \cref{eq:qB_uniform}, each element $u_i$ of $\mathbf{q}B$ is the sum of $m$ random variables, for which the Lindeberg's condition is also satisfied: for each $i$, as $m \to \infty$, $s_m^2 = r\sigma_A^4 (\sum_{l=1}^m \mathbf{q}_l^2 - \frac{\mathbf{q}_i^2}{5})\to \infty$ (assuming non-zero entries for $\mathbf{q}$). Note that $m$ is the dimension of $\mathbf{q}$, and $s_m$ is the sum of variances of the $m$ random variables, which we found in \cref{eq:mean_var_uniform}. Hence, $[|u_i - 0|>\epsilon s_m] \downarrow \emptyset$ as $m \to \infty$. Therefore, from \cref{thm:Lindeberg_CLT}, we also conclude that as $m \to \infty$, each element of $\mathbf{q}B$ approaches a Gaussian with the mean and variance found in \cref{eq:mean_var_uniform}. Therefore, we conclude that having an $\mathbf{A}_{\ell}$, where the elements of $\mathbf{A}_{\ell}$ are \emph{i.i.d} and from $\mathcal{U}(-b, b)$, then as $m \to \infty$, $\mathbf{q}\cdot (\mathbf{A}_{\ell}^\top \mathbf{A}_{\ell}-\mathbb{I}_m) \in \mathbb{R}^{1 \times m}$ approaches a Gaussian $\mathcal{N}\big((r\sigma_{\mathbf{A}}^2 - 1)~\mathbf{q}, ~r\sigma_A^4 (\|\mathbf{q}\|_2^2 ~ \mathbb{I}_m + \mathbf{D})\big)$, where $\mathcal{D}$ is diagonal and $\mathcal{D
}_{i,i} = \frac{-\mathbf{q}_i^2}{5}$. This completes the proof.

\end{proof}

\subsection{Proof of \Cref{lem:noise_var}}
\noisevar*
\begin{proof}
From the theorem's assumption, we know that elements of $\mathbf{A}_{\ell}$ are from $\mathcal{N}(0, \sigma_{\mathbf{A}}^2)$. Therefore, we can rewrite the product $\mathbf{q} \cdot (\mathbf{A}_{\ell}^\top \mathbf{A}_{\ell}-\mathbb{I}_m) \in \mathbb{R}^{1 \times m}$ as the following product:

\begin{align}\label{eq:reparam}
    \mathbf{q}\cdot (\sigma_{\mathbf{A}}^2 \mathbf{A}_{\ell}^\top \mathbf{A}_{\ell}-\mathbb{I}_m) \in \mathbb{R}^{1 \times m}
\end{align}
where \emph{the elements of $\mathbf{A}_{\ell}$ are now from standard normal distribution}. Let $a_{i,j}$ denote the element in $i$-th row and $j$-th column of this new $\mathbf{A}_{\ell}$. Therefore, for all $i$ and $j$, $a_{i,j}$ has distribution $\mathcal{N}(0, 1)$. Let $B=\sigma_{\mathbf{A}}^2 \mathbf{A}_{\ell}^\top \mathbf{A}_{\ell}-\mathbb{I}_m$. Also, let $A_{i,:}$ and $A_{:,j}$ denote the $i$-th row and $j$-th column of the new $\mathbf{A}_{\ell}$, respectively. We have:

\begin{align}
    B_{i,i} = \sigma_{\mathbf{A}}^2[\mathbf{A}_{\ell}^{\top} A]_{i,i}-1 = \sigma_{\mathbf{A}}^2 A_{:,i}^{\top} A_{:,i}-1 = \sigma_{\mathbf{A}}^2 \|A_{:,i}\|_2^2 -1 = (\sigma_{\mathbf{A}}^2 \sum_{l=1}^r a_{l,i}^2) -1
\end{align}

From \cref{eq:reparam}, we know that $a_{l,i}$ is from standard normal distribution. Hence, $a_{l,i}^2$ is a chi-squared with 1 degree of freedom: $a_{l,i}^2 \sim \mathcal{X}_1^2$. Therefore, $\sum_{l=1}^r a_{l,i}^2$, which is the sum of $r$ independent chi-squared variables with 1 degree of freedom, is a chi-squared with $r$ degrees of freedom: $\sum_{l=1}^r a_{l,i}^2 \sim \mathcal{X}_{r}^2$ (see \cref{thm:chisquared}). Therefore, for $i \in \{1,\ldots,m\}$, we have: 

\begin{align} \label{eq:diagonal_mean_var}
    &\mathbb{E}[B_{i,i}] = \mathbb{E}\big[\sigma_{\mathbf{A}}^2 \sum_{l=1}^r a_{l,i}^2\big] -1  = r \sigma_{\mathbf{A}}^2 -1, \nonumber
    \\
    &\texttt{Var}[B_{i,i}] = \texttt{Var}[\sigma_{\mathbf{A}}^2 \sum_{l=1}^r a_{l,i}^2] = \sigma_A^4 \texttt{Var}(\mathcal{X}_r^2) = 2r\sigma_A^4.
\end{align}

Similarly, we find the mean and variance of the non-diagonal elements $B_{i,j} (i\neq j)$ of $B$. We have:

\begin{align}\label{eq:nondiagonal}
    B_{i,j} = \sigma_{\mathbf{A}}^2[\mathbf{A}_{\ell}^{\top} A]_{i,j} = \sigma_{\mathbf{A}}^2 A_{:,i}^{\top} A_{:,j} = \sigma_{\mathbf{A}}^2 \sum_{l=1}^r a_{l,i} a_{l,j},
\end{align}

where $a_{l,i}$ and $a_{l,j}$ are independent and standard normal. Therefore, $a_{l,i} + a_{l,j} \sim \mathcal{N}(0,2)$. Similarly, $a_{l,i} - a_{l,j} \sim \mathcal{N}(0,2)$. So we can rewrite $a_{l,i} a_{l,j}$ as:

\begin{align}
    a_{l,i} a_{l,j} = \frac{1}{4}(a_{l,i} + a_{l,j})^2 - \frac{1}{4}(a_{l,i} - a_{l,j} )^2 = \frac{1}{2}z_1^2 - \frac{1}{2}z_2^2,
\end{align}
where $z_1$ and $z_2$ are from standard normal. Therefore, $a_{l,i} a_{l,j} = \frac{\nu_1 - \nu_2}{2}$, where $\nu_1, \nu_2 \sim \mathcal{X}_1^2$. Also, $a_{l,i} + a_{l,j}$ and $a_{l,i} - a_{l,j}$ are independent variables (because both $a_{l,i}$ and $a_{l,j}$ are Gaussian, independent, and have equal variance). Hence, $z_1$ and $z_2$ are independent, and likewise $\nu_1$ and $\nu_2$ are independent. We conclude that: 

\begin{align}\label{eq:cross_product}
a_{l,i} a_{l,j} = \frac{1}{2}(\nu_1 - \nu_2),
\end{align}

 where $\nu_1, \nu_2 \sim \mathcal{X}_1^2$, and are independent. 
 
 Now, let us assume $\nu_1, \nu_2 \sim \mathcal{X}_k^2$ (a more general case), and let $M_{\nu_1}(t) = \mathbb{E}[e^{t\nu_1}]$ be the moment generating function (MGF) of $\nu_1$. In this case, we know that $M_{\nu_1}(t) = M_{\nu_2}(t) = (1-2t)^{-\frac{k}{2}}$ (MGF of $\mathcal{X}_k^2$). Hence, $M_{\nu_1-\nu_2}(t) = M_{\nu_1}(t)\cdot M_{\nu_2}(-t) = (1-4t^2)^{-\frac{k}{2}} = \big( \frac{\frac{1}{4}}{\frac{1}{4}-t^2}\big)^{\frac{k}{2}}$, which is the MGF of a symmetric about origin \texttt{Variance-Gamma} distribution with parameters $\lambda = \frac{k}{2}, \alpha=\frac{1}{2}, \beta=0, \mu=0, \gamma = \frac{1}{2}$. Therefore, when $\nu_1, \nu_2 \sim \mathcal{X}_k^2$, then $\nu_1 - \nu_2$ has this distribution, which has mean $\mu + 2 \beta \lambda/\gamma^2=0$ and variance $2 \lambda (1+2 \beta^2/\gamma^2)/\gamma^2=4k$. 
 
 In \cref{eq:cross_product}, we had $k=1$, as we had $\nu_1, \nu_2 \sim \mathcal{X}_1^2$. Hence, based on the discussion above, we have:

 \begin{align}
     \mathbb{E}[a_{l,i} a_{l,j}] &= 0 \\
     \texttt{Var}[a_{l,i} a_{l,j}] &= \frac{1}{4}\texttt{Var}[\nu_1 - \nu_2] = \frac{4k}{4} = 1
 \end{align}

Consequently, based on \cref{eq:nondiagonal} and from the results above, we can compute the mean and variance of the non-diagonal elements of $B$ ($i \neq j$):

\begin{align}\label{eq:nondiagonal_mean_var}
    &\mathbb{E}[B_{i,j}] = \mathbb{E}\big[\sigma_{\mathbf{A}}^2 \sum_{l=1}^r a_{l,i}a_{l,j}\big] = \sigma_{\mathbf{A}}^2 \sum_{l=1}^r \mathbb{E}[a_{l,i}a_{l,j}] = 0, \nonumber 
    \\
    &\texttt{Var}[B_{i,j}] = \texttt{Var}[\sigma_{\mathbf{A}}^2\sum_{l=1}^r a_{l,i}a_{l,j}] = \sigma_A^4 \sum_{l=1}^r \texttt{Var}[a_{l,i}a_{l,j}] = r \sigma_A^4.
\end{align}

So far, we have computed the mean and variance of each entry in $B= \sigma_{\mathbf{A}}^2\mathbf{A}_{\ell}^\top \mathbf{A}_{\ell}-\mathbb{I}_m \in \mathbb{R}^{m \times m}$ in \cref{eq:diagonal_mean_var} and \cref{eq:nondiagonal_mean_var}. Now, for a given $\mathbf{q}\in \mathbb{R}^{1\times m}$, we have:

\begin{align} \label{eq:qB}
    \mathbf{q}\cdot B  = \sum_{l=1}^m \mathbf{q}_l B_{l,:},
\end{align}
where $B_{l,:}$ is row $l$ of $B$. Let $u_i$ denote the $i$-th element of $\mathbf{q}\cdot B$. Hence, for each element $u_i$ ($i \in \{1,\ldots, m\}$), we have:

\begin{align}\label{eq:mean_var}
\mathbb{E}[u_i] &= \mathbb{E}\bigg[ \sum_{l=1}^m \mathbf{q}_l B_{l,i}\bigg] =\mathbf{q}_i \mathbb{E}[B_{i,i}] + \sum_{l\neq i} \mathbf{q}_l \mathbb{E}[B_{l,i}] = (r\sigma_{\mathbf{A}}^2 - 1)\mathbf{q}_i, \nonumber\\
\texttt{Var}[u_i] &= \texttt{Var}\bigg[ \sum_{l=1}^m \mathbf{q}_l B_{l,i}\bigg] =\texttt{Var}\bigg[ \sum_{l=1}^m \mathbf{q}_l \sigma_{\mathbf{A}}^2 \sum_{t=1}^r a_{t,l} a_{t,i}\bigg] \nonumber \\
& = \sum_{l=1}^m \mathbf{q}_l^2\sigma_A^4 \sum_{t=1}^r \texttt{Var}[a_{t,l} a_{t,i}] = 2 \mathbf{q}_i^2 r\sigma_A^4 + \sum_{l\neq i} \mathbf{q}_l^2 r\sigma_A^4\nonumber\\
& = \mathbf{q}_i^2 r\sigma_A^4 + \sum_{l=1}^m \mathbf{q}_l^2 r\sigma_A^4 = r\sigma_A^4 (\mathbf{q}_i^2 + \|\mathbf{q}\|_2^2). 
\nonumber \\
\end{align}

In the second last line we could take the \texttt{Var} into the summations because every pair of the $mr$ summands are uncorrelated (so variance of their sum is sum of their variances). 

Finally, according to \cref{eq:qB}, each element $u_i$ of $\mathbf{q}B$ is the sum of $m$ random variables, for which the Lindeberg's condition is also satisfied: for each $i$, as $m \to \infty$, $s_m^2 = r\sigma_A^4 (\mathbf{q}_i^2 + \sum_{l=1}^m \mathbf{q}_l^2)\to \infty$ ($m$ is the dimension of $\mathbf{q}$, and $s_m$ is the sum of variances of the $m$ random variables, which we found in \cref{eq:mean_var}). Hence, $[|u_i - 0|>\epsilon s_m] \downarrow \emptyset$ as $m \to \infty$. Therefore, from \cref{thm:Lindeberg_CLT}, we also conclude that as $m \to \infty$, each element of $\mathbf{q}B$ approaches a Gaussian with the mean and variance found in \cref{eq:mean_var}. Therefore, we conclude that having an $\mathbf{A}_{\ell}$, where the elements of $\mathbf{A}_{\ell}$ are \emph{i.i.d} and from $\mathcal{N}(0, \sigma_{\mathbf{A}}^2)$, then as $m \to \infty$, $\mathbf{q}\cdot (\mathbf{A}_{\ell}^\top \mathbf{A}_{\ell}-\mathbb{I}_m) \in \mathbb{R}^{1 \times m}$ approaches a Gaussian $\mathcal{N}\big((r\sigma_{\mathbf{A}}^2 - 1)~\mathbf{q}, ~r\sigma_A^4 (\|\mathbf{q}\|_2^2 ~ \mathbb{I}_m + \mathbf{D})\big)$, which completes the proof.

\end{proof}

\subsection{Proof of \Cref{lem:total_var}}
\TVdist*
\begin{proof}
From \cref{eq:qB}, we had:

\begin{align}
    u_i = \sum_{l\neq i, l=1}^m \mathbf{q}_l B_{l,i} + \mathbf{q}_i B_{i,i},
\end{align}
where $B_{l,i} = \frac{1}{r} \mathbf{A}_{{\ell}_{:,l}}^\top A_{:,i} = \frac{1}{2r} \sum_{t=1}^r V_t$, where $V_t \sim \texttt{Variance-Gamma}(\nu, \alpha, \beta, \mu)$ with $\nu = \beta = \mu =0$ and $\alpha=\frac{1}{2}$. Also $B_{i,i} = \frac{1}{r} \mathbf{A}_{{\ell}_{:,l}}^\top A_{:,i} - 1 = \frac{\texttt{X}_r}{r} - 1$, where $X_r\sim \mathcal{X}_r^2$. Therefore, we can rewrite the equation above for $u_i$ as:

\begin{align}\label{eq:ui_detailed}
    u_i & = \sum_{l\neq i, l=1}^m  \frac{\mathbf{q}_l}{r} \sum_{t=1}^r a_{t,l} a_{t,i} + \frac{\mathbf{q}_i}{r} \sum_{t=1}^r (a_{t,i}^2-1)= \sum_{t=1}^r \underbrace{\frac{1}{r}\bigg(\mathbf{q}_i(a_{t,i}^2-1) + \sum_{l\neq i, l=1}^m \mathbf{q}_la_{t,l}a_{t,i}\bigg)}_{V_t}\\
    &= \sum_{t=1}^r \underbrace{\frac{1}{r}\bigg( \mathbf{q}_i(\texttt{X}_{t} - 1) + \sum_{l\neq i, l=1}^m \frac{\mathbf{q}_l}{2} \texttt{Y}_{t,l} \bigg)}_{V_t}.
\end{align}

In order to be able to use the \Cref{thm:tv_distance}, each pair of the summands should be \textbf{independent}. In the left summation in \Cref{eq:ui_detailed}, there are $mr$ summands that are uncorrelated, but some of them are not independent, e.g., $\frac{\mathbf{q}_1}{r} a_{1,1}a_{1,i}$ and $\frac{\mathbf{q}_2}{r} a_{1,2}a_{1,i}$ are dependent (although they are uncorrelated). So, on the right, we rewrote the left summation as the sum of $r$ summands $V_t$ that are independent. 
In the last line, we again rewrote $V_t$ as the summation of a (centered) chi-squared variable with one degree of freedom ($\texttt{X}_{t} \sim \mathcal{X}_1^2$) and other variables $\texttt{Y}_{t,l}$ where $\texttt{Y}_{t,l} \sim \texttt{Variance-Gamma}(\nu, \alpha, \beta, \mu)$ with $\nu = \beta = \mu =0$ and $\alpha=\frac{1}{2}$ (we know this from \Cref{eq:cross_product}) based on the parameterization used in \citep{gaunt2024}. Hence, $\texttt{Y}_{t,l}$ has mean $0$ and variance $4$ and $(\texttt{X}_{t} - 1)$ has mean $0$ and variance $2$. Also, $\texttt{Y}_{t,l}$ and $\texttt{Y}_{t,l'}$ are not independent, but are uncorrelated.

Now, we can consider $u_i$ as the weighted sum of these $r$ independent summands $\{V_t\}_{t=1}^r$ and apply \Cref{thm:tv_distance} to it as follows. In order to bound the TV distance between the distribution of $u_i$ and $\mathcal{N}(0, \frac{\|\mathbf{q}\|_2^2 + \mathbf{q}_i^2}{r})$, we have to use \cref{thm:tv_distance} and \cref{eq:lyapounov_ratios}: we have to find the third Lyapounov ratio $L_3 = \frac{\sum_{t=1}^{r} \mathbb{E}[|V_t|^3]}{s_r^3} = \frac{\sum_{t=1}^{r} \mathbb{E}[|V_t|^3]}{\big(\sum_{t=1}^{r} \texttt{Var}[V_t]\big)^{3}} = \frac{\sum_{t=1}^{r} \mathbb{E}[|V_t|^3]}{\big(\sum_{t=1}^{r} \mathbb{E}[V_t^2]\big)^{3}}$, where $V_t$ is each of the $r$ summands in \cref{eq:ui_detailed}.

First we note that, 

\begin{align}\label{eq:denominator}
    s_r^3 = (s_r^2)^{\frac{3}{2}}= \bigg(\sum_{t=1}^r \texttt{Var}[V_t]\bigg)^{\frac{3}{2}} = \bigg( \sum_{t=1}^r \frac{1}{r^2}\big( 2\mathbf{q}_i^2 + \sum_{l \neq i, l=1}^m \mathbf{q}_l^2\big)\bigg)^{\frac{3}{2}} = \bigg(\frac{\|\mathbf{q}\|_2^2 + \mathbf{q}_i^2}{r}\bigg)^{\frac{3}{2}} \geq  \frac{\|\mathbf{q}\|_2^3}{r \sqrt{r}},
\end{align}

where the third equality holds because the $m$ summands in $V_{t}$ are uncorrelated.  

Now, we find the numerator $\sum_{i=1}^{r} \mathbb{E}[|V_t|^3]$ of the ratio. We leverage the Minkowski inequality in the $L^3$ space: suppose $V = \sum_{t=1}^r X_t$, i.e., the sum of $r$ random variables. We have: 

\begin{align}
    (\mathbb{E} [|V|^3])^{\frac{1}{3}} \leq \sum_{t=1}^r (\mathbb{E} [|X_t|^3])^{\frac{1}{3}}.
\end{align}

We apply this inequality to each of the $V_t$ terms in \Cref{eq:ui_detailed}. For the first summand in $V_t$, we have:

\begin{align}
    \mathbb{E}[|\frac{\mathbf{q}_i}{r} (\texttt{X}_t - 1)|^3] \leq \frac{9 |\mathbf{q}_i|^3}{r^3},
\end{align}
considering the fact that the third absolute central moment of a degree-1 chi-squared variable ($\texttt{X}_t$) is smaller than 9. Also, from \citep{gaunt2024}, we know that for $\texttt{Y}_{t,l} \sim \texttt{Variance-Gamma}(\nu, \alpha, 0, 0), \mathbb{E}[|\texttt{Y}_{t,l}|^r]= \frac{2^r}{\sqrt{\pi} \alpha^r} \frac{\Gamma(\nu + (r+1)/2) \Gamma((r+1)/2)}{\Gamma(\nu + 1/2)}$. Therefore, for $\texttt{Y}_{t,l} \sim \texttt{Variance-Gamma}(0, \frac{1}{2}, 0, 0), \mathbb{E}[|\texttt{Y}_{t,l}|^3]= \frac{2^6}{\pi}$. Therefore, for the other summands, we have:

\begin{align}
    \mathbb{E}[|\frac{\mathbf{q}_l}{2r} \texttt{Y}_{t,l}|^3] = \frac{|\mathbf{q}_l|^3}{8 r^3}\cdot \frac{2^6}{\pi}.
\end{align}

Consequently, from the last two equalities and the Minkowski inequality, we have:

\begin{align}
    (\mathbb{E}[|V_t|^3])^{\frac{1}{3}} \leq (\frac{9 |\mathbf{q}_i|^3}{r^3})^{\frac{1}{3}} + \sum_{l\neq i, l=1}^m (\frac{8 |\mathbf{q}_l|^3}{\pi r^3})^{\frac{1}{3}} \leq \sum_{l=1}^m (\frac{9 |\mathbf{q}_l|^3}{r^3})^{\frac{1}{3}} = \sum_{l=1}^m \frac{3^{2/3}|\mathbf{q}_l|}{r} = \frac{3^{2/3}\|\mathbf{q}\|_1}{r}.
\end{align}

Therefore, we have:
\begin{align}
    \mathbb{E}[|V_t|^3] \leq  \frac{9\|\mathbf{q}\|_1^3}{r^3} .
\end{align}

Finally, we find the numerator of the Lyapounov ratio:

\begin{align}\label{eq:numerator}
    \sum_{t=1}^r \mathbb E [|V_t|^3] \leq \sum_{t=1}^r \frac{9\|\mathbf{q}\|_1^3}{r^3}  = \frac{9\|\mathbf{q}\|_1^3}{r^2}   
\end{align}




Therefore, for the sum $u_i$ in \cref{eq:ui_detailed}, we can find the third Lyapounov ratio using \Cref{eq:numerator} and \Cref{eq:denominator}, and based on \cref{thm:tv_distance}, we have:

\begin{align}\label{eq:tv}
    \|Q_m(x)-\Phi(x)\|_{TV} \leq C_D L_3 \leq C_D \cdot \frac{9}{ r^2} \|\mathbf{q}\|_1^3 \cdot \frac{r\sqrt{r}}{\|\mathbf{q}\|_2^3} =  \frac{9C_D}{\sqrt{r}} \bigg(\frac{\|\mathbf{q}\|_1}{\|\mathbf{q}\|_2}\bigg)^3,
\end{align}

where $C_D$ is a constant, which depends only on $D$, where $D$ is an upperbound for the KL divergence between each of the $r$ summands $V_t$ in \cref{eq:ui_detailed} and a Gaussian with the same mean $0$ and variance $\frac{\|\mathbf{q}\|_2^2 + \mathbf{q}_i^2}{r}$ ($D$, and consequently $C_D$, are expected to be small because $V_t$ comprises of $m-1$ \texttt{Variance-Gamma} random variables with mean zero as well as one centered chi-square variable with mean zero, and $m$ is large. So the distribution of $V_t$ is expected to be close to $\mathcal{N}(0, \frac{\|\mathbf{q}\|_2^2 + \mathbf{q}_i^2}{r})$). 
Therefore:

\begin{align}
    \|Q_m(x)-\Phi(x)\|_{TV} \in \mathcal{O} \bigg(\frac{1}{\sqrt{r}}\bigg).
\end{align}

It is noteworthy that if the $mr$ summands in \Cref{eq:ui_detailed} were all independent (we saw that some pairs were just uncorrelated), we could consider $u_i$ as the sum of $mr$ independent summands (instead of \Cref{eq:ui_detailed} writing $u_i$ as the sum of $r$ independent variables $\{V_t\}_{t=1}^r$). In that case, we would be able to bound the numerator of the Lyapounov ratio with $\frac{9\|\mathbf{q}\|_3^3}{r^2}$ (instead of $\frac{9\|\mathbf{q}\|_1^3}{r^2}$ in \Cref{eq:numerator}). This better bound would enable us to prove that $\|Q_m(x)-\Phi(x)\|_{TV} \leq \frac{9C_D}{\sqrt{r}} \big(\frac{\|\mathbf{q}\|_3}{\|\mathbf{q}\|_2}\big)^3$ and consequently $\|Q_m(x)-\Phi(x)\|_{TV} \in \mathcal{O} \bigg(\frac{1}{\sqrt{mr}}\bigg)$ (with an additional assumption of $0<c<|\mathbf{q}_i|<C$, for all $i$).

\end{proof}


\end{document}